\documentclass{article} 
\usepackage{iclr2027_conference,times}

\usepackage{amsmath}
\let\castOriginalEqref\eqref

\usepackage{amsmath,amsfonts,bm}

\def\eqref#1{equation~\ref{#1}}

\def\1{\bm{1}}

\DeclareMathAlphabet{\mathsfit}{\encodingdefault}{\sfdefault}{m}{sl}
\SetMathAlphabet{\mathsfit}{bold}{\encodingdefault}{\sfdefault}{bx}{n}

\let\eqref\castOriginalEqref

\usepackage{url}
\usepackage{amsthm}
\usepackage{amsmath}
\usepackage{amssymb}
\usepackage{algorithm}
\usepackage{algorithmic}
\usepackage{graphicx}
\usepackage{adjustbox}
\usepackage{dblfloatfix}
\usepackage{flafter}
\usepackage{float}
\usepackage{placeins}

\definecolor{cvprblue}{rgb}{0.21,0.49,0.74}
\usepackage[pagebackref,breaklinks,colorlinks,allcolors=cvprblue]{hyperref}
\usepackage{enumitem}
\usepackage{booktabs}
\usepackage{multirow}
\usepackage{xcolor}
\usepackage{tabularx}
\usepackage{tcolorbox}
\newtheorem{proposition}{Proposition}

\newcommand{\best}[1]{\textbf{#1}}

\title{CAST: Credit Alignment for Cross-Tokenizer On-Policy Distillation}

\author{\textbf{Jingru Yang}$^{1}$ \quad \textbf{Jinhang Wu}$^{1}$ \quad
\textbf{Minjing Shi}$^{1}$ \quad \textbf{Shengfeng He}$^{2}$\\
\textbf{Xuyan Li}$^{1}$ \quad \textbf{Man Tan}$^{1}$ \quad \textbf{Biye Li}$^{1}$\\[3pt]
$^{1}$Taobao \& Tmall Group of Alibaba\\
$^{2}$Singapore Management University\\[3pt]
\footnotesize\texttt{yjr543596@alibaba-inc.com}, \texttt{jinhang.wjh@taobao.com}\\
\footnotesize\texttt{shiminjing.smj@alibaba-inc.com}, \texttt{shengfenghe@smu.edu.sg}\\
\footnotesize\texttt{lixuyan.lxy@alibaba-inc.com}, \texttt{tanman.tm@alibaba-inc.com}\\
\footnotesize\texttt{libiye.lby@alibaba-inc.com}}

\iclrfinalcopy 

\hypersetup{
  pdftitle={CAST: Credit Alignment for Cross-Tokenizer On-Policy Distillation},
  pdfauthor={Jingru Yang, Jinhang Wu, Minjing Shi, Shengfeng He, Xuyan Li, Man Tan, Biye Li}
}

\begin{document}
\maketitle
\lhead{Preprint}
\begingroup
\vspace{-0.3in}
\setlength{\intextsep}{0pt}
\setlength{\abovecaptionskip}{3pt}
\setlength{\belowcaptionskip}{0pt}
\begin{figure}[H]
    \centering
    \includegraphics[width=0.95\linewidth]{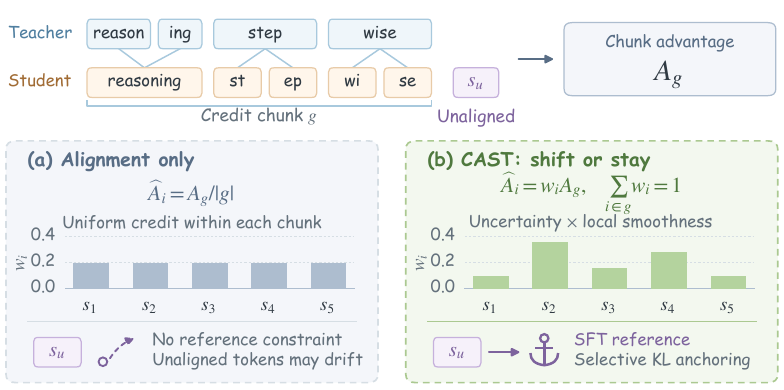}\vspace{-2mm}
    \caption{\textbf{Chunk-level alignment does not resolve credit alignment.}
\textbf{(a)} Alignment-only propagation uniformly distributes each chunk's
advantage across aligned student tokens, leaving unaligned tokens unconstrained.
\textbf{(b)} CAST assigns credit using student uncertainty and local
teacher-signal smoothness, while anchoring unaligned tokens to the supervised fine-tuning (SFT) reference with selective KL regularization.}
    \vspace{-1mm}
    \label{fig:intro_teaser}
\end{figure}
\endgroup

\begin{abstract}
Cross-tokenizer on-policy distillation (OPD) requires aligning teacher and student sequences before their scores can be compared. Decoded-text alignment provides such correspondence at the chunk level, but leaves a distinct problem unresolved: \emph{credit alignment}. A chunk-level signal specifies how much a span should change, but not how that signal should be distributed across student tokens, while tokens without teacher correspondence remain unconstrained. We propose \textbf{CAST} (\textbf{C}hunk-aware credit \textbf{A}ssignment with \textbf{S}elective \textbf{T}oken Anchoring), a \emph{shift-or-stay} framework that addresses both issues. For aligned chunks, CAST redistributes each chunk's advantage across student tokens using predictive uncertainty and local teacher-signal smoothness, emphasizing learning opportunities supported by locally consistent teacher feedback while preserving the chunk's total advantage. The resulting weights exactly solve a KL-regularized credit-allocation problem. For unaligned positions, CAST selectively anchors the student to its initial policy through a KL penalty, limiting drift where teacher supervision is unavailable. Because credit assignment is separated from chunk construction, CAST also applies to punctuation-defined or fixed-length chunks under shared tokenization. Across six mathematics, science, and code benchmarks, CAST improves the average over the strongest evaluated baseline by $2.45$ and $1.32$ points for cross-tokenizer transfer from Qwen3-8B to Mistral-7B and Llama-3.1-8B, respectively, and by $2.10$ points for shared-tokenizer transfer to Qwen3-1.7B.
\end{abstract}

\begin{figure}[t]
\centering
\includegraphics[width=\linewidth]{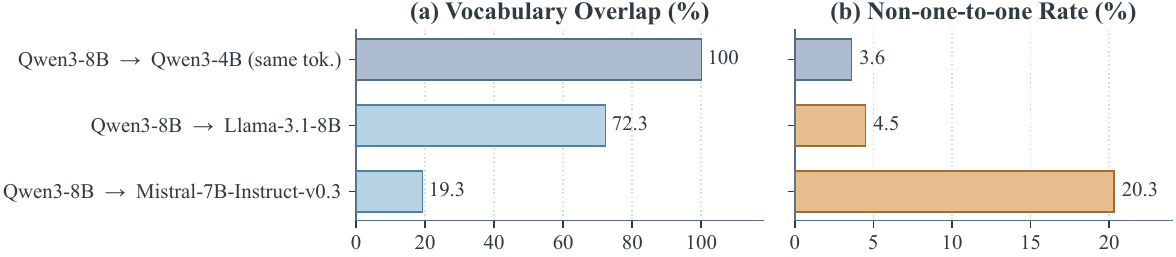}\vspace{-2mm}
\caption{\textbf{Tokenizer mismatch persists despite vocabulary overlap.}
\textbf{(a)} Teacher vocabulary with a decoded-content match in the student.
\textbf{(b)} Student response tokens without one-to-one teacher correspondence
under heterogeneous decoded-text alignment
(Appendix~\ref{app:chunk_construction}).
The shared-tokenizer $3.6\%$ residual results from decode/re-encode
normalization and special/boundary-token handling; training instead uses
identity correspondence.}\vspace{-5mm}
\label{fig:tokstats}
\end{figure}

\section{Introduction}

On-policy distillation (OPD) trains a student on trajectories sampled from its
own policy using feedback from a stronger teacher
\citep{agarwal2024policy, gu2024minillm, pmlr-v235-ko24c, jang2026stable}.
Because supervision is applied to states the student actually visits, OPD can
target student-specific errors and reduce the train--inference mismatch induced
by learning only from fixed teacher-generated responses. It has therefore
become an increasingly useful paradigm for language-model post-training,
particularly for transferring reasoning ability
\citep{yang2025qwen3, bousselham2026vold}.

A basic difficulty arises when teacher and student use different tokenizers.
With a shared tokenizer, the token-wise log-density ratio
$\log \pi_T(y_t\mid x,y_{<t})-\log \pi_{\theta_{\mathrm{old}}}(y_t\mid x,y_{<t})$
compares probabilities of the same next-token event. Across tokenizers, however,
the same text can be segmented differently, so one student token may correspond
to multiple teacher tokens, or vice versa. Teacher and student logits are also
defined over different vocabularies, making direct token-wise comparison
ill-defined. As shown in Figure~\ref{fig:tokstats}, only $19.3\%$ of the
Qwen3-8B vocabulary has a decoded-content match in
Mistral-7B-Instruct-v0.3, compared with $72.3\%$ in Llama-3.1-8B.
Moreover, vocabulary overlap alone does not imply token correspondence:
identical IDs need not represent the same event, and identical text can still
admit different segmentations.

Existing cross-tokenizer distillation methods address this mismatch by
constructing a common space in which teacher and student scores can be compared
\citep{boizard2025towards, cui2025multi, minixhofer2026universal, singh2026cross}.
Dynamic Probabilistic Cross-tokenizer Alignment (DPCA), for example, extends
decoded-text alignment to OPD by matching teacher and student sequences into
text-equivalent chunks and aggregating their token probabilities within each
chunk \citep{niu2026tokenizerbarrier}. Such methods establish
\emph{chunk-level alignment}: the teacher and student scores now refer to the
same decoded text.

Our key observation is that \emph{chunk alignment does not resolve credit
alignment}. Alignment determines \emph{what} teacher and student quantities
are comparable, but OPD ultimately optimizes individual student-token
probabilities. A chunk-level discrepancy therefore still has to be translated
into token-level updates. Uniform propagation gives every student token in a
chunk the same advantage, implicitly treating all positions as equally useful
and equally reliable learning targets. This need not hold. Some positions are
already predicted confidently by the student, whereas others are uncertain and
offer greater opportunity for correction. At the same time, high student
uncertainty alone is insufficient: it can coincide with abrupt local variation
in the teacher-derived signal, where aggressively concentrating credit may be
undesirable. A second issue appears outside aligned regions. Tokens without
teacher correspondence receive no direct supervision, yet their predictions can
still drift because they share parameters with supervised positions. Thus,
decoded-text alignment leaves two coupled optimization questions:
\emph{where should aligned chunk credit be assigned, and how should the student
behave where teacher evidence is absent?}

We address these questions with
\textbf{CAST} (\textbf{C}hunk-aware credit \textbf{A}ssignment with
\textbf{S}elective \textbf{T}oken Anchoring), a token-level
\emph{shift-or-stay} framework for OPD
(Figure~\ref{fig:intro_teaser}).
At aligned positions, CAST selectively \emph{shifts} the student toward the
teacher by redistributing each chunk's advantage across its student tokens.
Its credit rule combines two complementary signals:
student predictive uncertainty identifies \emph{where learning may be useful},
while local teacher-signal smoothness identifies \emph{where that supervision
is locally consistent}. Their product therefore favors uncertain student
predictions supported by a stable local teacher signal. Normalization within
each chunk preserves the chunk's total advantage, so CAST changes only
\emph{where} the credit is assigned, not \emph{how much} aggregate credit the
chunk receives. Moreover, the resulting weights exactly solve a
KL-regularized allocation problem that trades off an uncertainty-based prior
against local teacher-signal variation, with bounded relative reweighting from
the smoothness term.

At unaligned positions, CAST instead encourages the student to \emph{stay}
close to its initial policy. Selective token anchoring applies a KL penalty
toward the supervised fine-tuning (SFT) initialization only where aligned
teacher supervision is unavailable, constraining incidental drift without
directly competing with teacher-guided updates. Together, the two components
separate the roles of teacher supervision and reference preservation:
\emph{shift where teacher evidence is available, stay where it is not}.

This perspective also makes CAST applicable beyond tokenizer mismatch.
Credit assignment is conceptually separate from how chunks are constructed.
Decoded-text alignment provides chunks under heterogeneous tokenization, but
the same allocation rule can operate on punctuation-defined or fixed-length
chunks when teacher and student share a tokenizer. CAST therefore addresses a
more general problem: converting span-level learning signals into structured
token-level updates. We additionally find that multi-rollout sampling can
further improve performance by exposing the credit mechanism to multiple
student trajectories per prompt.

We evaluate CAST on six benchmarks spanning mathematics, science, and code:
AIME~2024, AIME~2025, AIME~2026 \citep{aime26}, MATH-500
\citep{hendrycks2021measuring}, GPQA-Diamond
\citep{rein2024gpqa}, and LiveCodeBench
\citep{jain2025livecodebench}.
For cross-tokenizer distillation from Qwen3-8B to Mistral-7B and
Llama-3.1-8B, CAST improves the six-benchmark average over the strongest
evaluated baseline by $2.45$ and $1.32$ points, respectively. Under shared
tokenization, CAST improves the corresponding average for Qwen3-1.7B by
$2.10$ points. On Mistral, its advantage over uniform and uncertainty-only
credit controls persists across evaluated seeds, together with fewer observed
training collapses
(Table~\ref{tab:seed_summary}; Appendix~\ref{app:seed_robustness}). On Llama,
CAST also outperforms ULD across all three evaluated seeds
(Table~\ref{tab:llama_seed_summary}; Appendix~\ref{app:seed_robustness}).

Our contributions are:
\begin{itemize}[leftmargin=*,topsep=0pt, partopsep=0pt]
    \item We distinguish \emph{chunk-level alignment} from
    \emph{credit alignment} in cross-tokenizer OPD. Chunk alignment establishes
    comparable teacher and student signals, but leaves their token-level
    allocation and the behavior of unaligned positions unresolved.

    \item We propose CAST, a \emph{shift-or-stay} framework that allocates
    chunk credit according to student uncertainty and local teacher-signal
    smoothness while selectively anchoring unaligned positions. The allocation
    preserves each chunk's total advantage and admits an exact
    KL-regularized optimization interpretation.

    \item We show that credit alignment is useful beyond heterogeneous
    tokenization by applying the same rule to punctuation-defined and
    fixed-length chunks under shared tokenization. Across six benchmarks,
    CAST outperforms the strongest evaluated baseline for both heterogeneous
    students and shared-tokenizer Qwen3-1.7B, with additional studies of
    robustness, training dynamics, and component interactions.
\end{itemize}
\section{Related Work}
\label{sec:related}

\noindent\textbf{On-policy distillation.}
Generalized knowledge distillation (GKD) supervises student-generated sequences
with teacher token distributions and serves as our shared-tokenizer baseline
\citep{agarwal2024policy}. MiniLLM adopts a reverse-KL objective, while
DistiLLM and DistiLLM-2 develop streamlined and contrastive objectives
\citep{gu2024minillm, pmlr-v235-ko24c,
3780338.3781556}. KDRL combines distillation with
reinforcement learning for reasoning post-training
\citep{xu2025kdrl}. Training stability has also been
studied through group-relative baselines from multiple responses per prompt
and adaptive target reformulation \citep{guo2025deepseek, jang2026stable}.
These methods assume a shared tokenizer. In this setting, CAST retains its
token-level credit rule and uses punctuation-defined or fixed-length chunks.

\begin{figure}[t]
    \centering
    \includegraphics[width=0.95\textwidth]{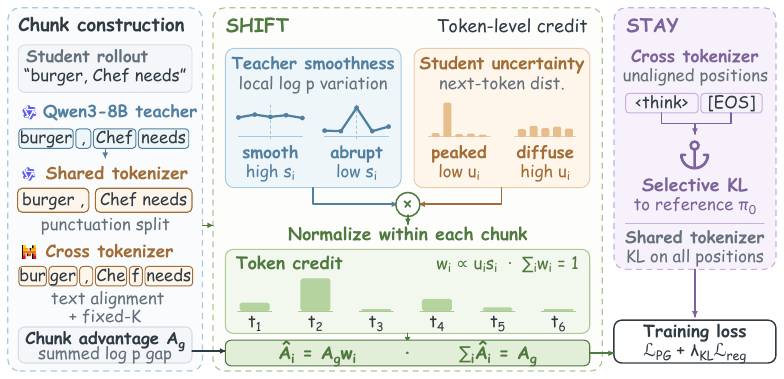}\vspace{-2mm}
    \caption{\textbf{CAST: token-level shift-or-stay optimization.}
\textbf{Shift} normalizes uncertainty $u_i$ times smoothness $s_i$ to allocate
token credit while preserving each chunk's advantage. \textbf{Stay} anchors
unaligned predictions to the initial policy. Heterogeneous pairs use decoded-text
alignment and fixed-length credit chunks; shared-tokenizer training uses
punctuation-defined chunks here and regularizes all positions.}\vspace{-3mm}
    \label{fig:method_overview}
\end{figure}

\noindent\textbf{Cross-tokenizer distillation and credit assignment.}
ULD compares sorted logit distributions through
optimal transport, and MultiLevelOT extends transport across multiple
granularities \citep{boizard2025towards, cui2025multi}. ALM matches likelihoods
over shared text chunks, while byte-level distillation uses a
tokenizer-independent byte interface \citep{minixhofer2026universal, singh2026cross}.
Minimum edit-distance (MinED) token mapping supports knowledge fusion
\citep{wan2024knowledge}. These methods primarily determine
\emph{what} can be compared across tokenizers. DPCA extends decoded-text
alignment to on-policy distillation by matching teacher and student sequences
into text-equivalent chunks and propagating aggregate teacher scores
\citep{niu2026tokenizerbarrier}. Shared-tokenizer methods use local divergence
reweighting or adaptive policy distillation
\citep{jung2025todi, zhang2025aligndistil}.
E-SCOT constrains offline optimal transport by character overlap and reweights
token-level costs using teacher--student R\'enyi-entropy gaps
\citep{liu2026entropyawarespanconstrained}. CAST addresses token-level
\emph{shift-or-stay} optimization after decoded-text alignment. It redistributes
each chunk's on-policy advantage to individual tokens, using uncertainty to
highlight potential learning opportunities and smoothness to temper credit
where teacher scores vary abruptly, while preserving the total advantage.
Selective KL anchoring keeps unaligned predictions close to the initial
student policy, constraining drift from shared parameter updates.

\section{Method}
\label{sec:method}

\subsection{Preliminaries}
\label{sec:formulation}
\label{sec:alignment}

We distill a teacher $p_T$ into a student policy $\pi_\theta$ with vocabulary
$\mathcal V_S$; the two models may use different tokenizers. For prompt $x$, the rollout policy samples
$y=(a_1,\ldots,a_N)\sim\pi_{\theta_{\rm old}}(\cdot\mid x)$ and records
$\ell_i^S=\log\pi_{\theta_{\rm old}}(a_i\mid x,a_{<i})$. The teacher
retokenizes the decoded response as $(b_1,\ldots,b_M)$ and returns
$\ell_j^T=\log p_T(b_j\mid x^T,b_{<j})$ under its own input format $x^T$.

Following DPCA~\citep{niu2026tokenizerbarrier}, we match decoded text across
tokenizers to form \emph{alignment chunks} and obtain uniformly projected teacher
scores $\widetilde\ell_i^T$ on aligned student positions $\mathcal A$, leaving
$\mathcal U$ unaligned. We regroup aligned tokens into $K$-token
\emph{credit chunks} $g\in\mathcal G$
($K=6$ by default; Figure~\ref{fig:method_overview}, left).

\label{sec:homogeneous_method}
With shared tokenization, $M=N$, $\widetilde\ell_i^T=\ell_i^T$, and
$\mathcal U=\varnothing$; $\mathcal G$ uses punctuation-defined or fixed-length
chunks. Construction details are in Appendix~\ref{app:chunk_construction}.

\subsection{Uncertainty--Smoothness Credit Assignment}
\label{sec:credit_assignment}

The \emph{shift} component (Figure~\ref{fig:method_overview}, center)
redistributes each credit chunk's advantage to individual student tokens.
For $g\in\mathcal G$, we define the chunk's
teacher--student log-probability gap and token advantages as
\begin{equation}
    A_g=\sum_{i\in g}(\widetilde{\ell}_i^T-\ell_i^S),
    \qquad
    \widehat A_i=A_gw_i,\quad w_i\geq0,\quad\sum_{i\in g}w_i=1.
    \label{eq:chunk_advantage}
\end{equation}
\textbf{Student uncertainty} uses rollout-policy entropy to highlight
potential learning opportunities:
\begin{equation}
    H_i=-\sum_{v\in\mathcal V_S}\pi_{\theta_{\rm old}}(v\mid x,a_{<i})
    \log\pi_{\theta_{\rm old}}(v\mid x,a_{<i}),
    \qquad u_i=\max(H_i,10^{-8}).
    \label{eq:entropy}
\end{equation}
\textbf{Teacher-signal smoothness} tempers credit where teacher scores vary abruptly.
For ordered $g=(i_1,\ldots,i_{K_g})$ with $K_g=|g|$ and
$q_k=\widetilde\ell_{i_k}^T$, define
\begin{equation}
\delta_k=\begin{cases}
0,&K_g=1,\\
|q_2-q_1|,&k=1,\ K_g>1,\\
|q_{K_g}-q_{K_g-1}|,&k=K_g,\ K_g>1,\\
(|q_k-q_{k-1}|+|q_{k+1}-q_k|)/2,&1<k<K_g.
\end{cases}
\label{eq:local_delta}
\end{equation}
For heterogeneous tokenizers, projected scores are constant within each
alignment chunk (Eq.~\eqref{eq:teacher_projection}, Appendix~\ref{app:chunk_construction}).
Thus $\delta_k$ reflects transitions between alignment chunks within credit
chunk $g$. With a shared tokenizer,
Eq.~\eqref{eq:local_delta} measures adjacent teacher-token score variation.
We combine uncertainty with inverse local variation:
\begin{align}
    \epsilon_g&=\max\left(\frac{1}{K_g}\sum_{k=1}^{K_g}\delta_k,
    0.1\right),\qquad s_{i_k}=\frac{1}{\delta_k+\epsilon_g},
    \label{eq:smoothness}\\
    w_{i_k}^{\mathrm{US}}&=
    \frac{u_{i_k}s_{i_k}}{\sum_{r=1}^{K_g}u_{i_r}s_{i_r}},
    \qquad \widehat A_{i_k}=A_gw_{i_k}^{\mathrm{US}}.
    \label{eq:us_credit}
\end{align}
The adaptive floor limits inverse weights at small variations.
Appendix~\ref{app:token_credit} illustrates the resulting token allocation
on a recorded rollout (Figure~\ref{fig:credit_case}).
Smoothness is a structural prior, not an estimate of teacher correctness.
All weighting quantities are stop-gradient.
Constant $u_i$ or $\delta_i$ recovers smoothness-only or uncertainty-only
allocation, respectively; if both are constant, $w_i=1/|g|$.

\paragraph{Variational characterization and bounded distortion.}
For credit chunk $g$, define the
uncertainty prior $p_i^u=u_i/\sum_{j\in g}u_j$, the dimensionless local
irregularity $\rho_i=1+\delta_i/\epsilon_g$, and the probability simplex
$\Delta_g=\{w:w_i\geq0,\ \sum_{i\in g}w_i=1\}$.

\begin{proposition}[Credit allocation under a local-variation penalty]
\label{prop:variational_credit}
CAST uniquely minimizes
\begin{equation}
    w^{\mathrm{US}}
    =\arg\min_{w\in\Delta_g}
    \left\{
    \mathrm{KL}(w\,\|\,p^u)
    +\sum_{i\in g}w_i\log\rho_i
    \right\}.
    \label{eq:variational_credit}
\end{equation}
Moreover, the adaptive scale in Eq.~\eqref{eq:smoothness} bounds the relative
distortion due to smoothness alone:
\begin{equation}
    1\leq\rho_i\leq K_g+1,
    \qquad
    \frac{\max_{i\in g}s_i}{\min_{i\in g}s_i}\leq K_g+1.
    \label{eq:smoothness_bound}
\end{equation}
\end{proposition}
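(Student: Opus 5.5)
The plan is to rewrite the objective in Eq.~\eqref{eq:variational_credit} as a single KL divergence to a tilted distribution. Then existence, uniqueness and the closed form all follow from Gibbs' inequality. Combining the two terms gives
\[
\mathrm{KL}(w\,\|\,p^u)+\sum_{i\in g}w_i\log\rho_i=\sum_{i\in g}w_i\log\frac{w_i}{p^u_i/\rho_i}.
\]
Set $Z=\sum_{j\in g}p^u_j/\rho_j$ and $q_i=(p^u_i/\rho_i)/Z$, so that $q\in\Delta_g$. The objective then equals $\mathrm{KL}(w\,\|\,q)-\log Z$.

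Since $-\log Z$ does not depend on $w$, minimizing over $\Delta_g$ amounts to minimizing $\mathrm{KL}(w\,\|\,q)$. This quantity is nonnegative and vanishes only at $w=q$. Uniqueness can be seen either from Gibbs' inequality or from strict convexity of $w\mapsto\sum w_i\log w_i$ on the simplex. The feasible set is fine because $u_i\ge 10^{-8}>0$ and $\rho_i\ge1$, so $p^u_i>0$ and $q$ has full support. No boundary issues arise, and the convention $0\log 0=0$ handles any $w$ with zero entries.

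Next I identify $q$ with $w^{\mathrm{US}}$. From Eq.~\eqref{eq:smoothness}, $\rho_i=1+\delta_i/\epsilon_g=(\delta_i+\epsilon_g)/\epsilon_g=1/(\epsilon_g s_i)$. Therefore $p^u_i/\rho_i=\epsilon_g\,u_i s_i/\sum_j u_j$. The factor $\epsilon_g/\sum_j u_j$ is common to all $i\in g$ and cancels under normalization. This gives $q_i=u_is_i/\sum_r u_rs_r=w^{\mathrm{US}}_i$, as in Eq.~\eqref{eq:us_credit}.

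For the bound in Eq.~\eqref{eq:smoothness_bound}, the lower bound $\rho_i\ge1$ is immediate from $\delta_i\ge0$ and $\epsilon_g>0$. For the upper bound I need $\delta_i\le K_g\epsilon_g$. Since $\epsilon_g\ge\frac1{K_g}\sum_k\delta_k$ and every $\delta_k\ge0$, each single term satisfies $\delta_i\le\sum_k\delta_k\le K_g\epsilon_g$. Hence $\rho_i\le K_g+1$. The case $K_g=1$ is covered trivially because $\delta=0$.

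For the ratio, I use $s_i=1/(\epsilon_g\rho_i)$, which gives $\max s/\min s=\max\rho/\min\rho\le (K_g+1)/1$. The only step requiring care is the tilting identity and checking that the $\epsilon_g$ factor is uniform within the chunk. The rest is routine, so I expect no real obstacle.
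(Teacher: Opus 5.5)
Your proposal is correct and follows the paper's proof essentially line for line. The paper also writes the objective as $\mathrm{KL}(w\,\|\,w^*)-\log Z_g$ with $w^*_i = p^u_i/(\rho_i Z_g)$ and identifies $w^*$ with $w^{\mathrm{US}}$; it obtains the ratio bound from $s_i=(\epsilon_g\rho_i)^{-1}$ and the $\rho_i$ bounds from $0\le\delta_i\le\sum_j\delta_j\le K_g\epsilon_g$. Your extra checks (full support of $p^u$ from the entropy floor, and cancellation of the common factor $\epsilon_g/\sum_j u_j$) only make explicit steps the paper leaves implicit.
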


\begin{proof}
Let $Z_g=\sum_{i\in g}p_i^u/\rho_i$ and
$w_i^*=p_i^u/(\rho_i Z_g)=w_i^{\mathrm{US}}$. The objective satisfies
$F_g(w)=\mathrm{KL}(w\,\|\,w^*)-\log Z_g$, uniquely minimized at $w^*$
by nonnegativity of KL. Since
$0\leq\delta_i\leq\sum_j\delta_j\leq K_g\epsilon_g$, we have
$1\leq\rho_i\leq K_g+1$. Using $s_i=(\epsilon_g\rho_i)^{-1}$ gives
the smoothness-ratio bound.
\end{proof}

Equation~\eqref{eq:variational_credit} balances the uncertainty prior against
local teacher-score variation. Optimality concerns this allocation objective;
gradient and downstream effects are assessed in Figure~\ref{fig:grad_quality}
and Table~\ref{tab:credit_components}. Smoothness reweighting is bounded, but
uncertainty can still concentrate credit.

\subsection{Selective Token Anchoring and Optimization}
\label{sec:selective_kl}
\label{sec:objective}

The \emph{stay} component (Figure~\ref{fig:method_overview}, right)
constrains drift from shared parameter updates by
keeping unaligned predictions close to the initial student policy $\pi_0$
(the SFT initialization for cross-tokenizer pairs). We apply a selective KL penalty:
\begin{align}
    z_i &= \log\pi_0(a_i\mid x,a_{<i})
           -\log\pi_\theta(a_i\mid x,a_{<i}),
    \qquad d_i = \exp(z_i)-z_i-1,
    \label{eq:reference_ratio}
    \\[-0.3em]
    \mathcal L_{\rm anchor}
    &=\frac{\sum_{i\in\mathcal U}\min(d_i,d_{\max})}
    {\max(|\mathcal U|,1)}.
    \label{eq:anchor_loss}
\end{align}
Here $d_i\geq0$ vanishes when sampled-token probabilities match, and $d_{\max}$
clips outliers. Thus aligned positions remain free to absorb teacher credit,
while $\mathcal L_{\rm anchor}=0$ if $\mathcal U=\varnothing$.

We use multi-rollout sampling with $n$ independent student responses per prompt,
scoring and chunking each separately before averaging over their tokens. With
$r_i(\theta)=\pi_\theta(a_i\mid x,a_{<i})/
\pi_{\theta_{\rm old}}(a_i\mid x,a_{<i})$, the aligned-token objective is
\begin{equation}
\mathcal L_{\rm PG}=-\frac{1}{|\mathcal A|}\sum_{i\in\mathcal A}
\min\!\left(r_i\widehat A_i,
\operatorname{clip}(r_i,1-\varepsilon_{\rm low},1+\varepsilon_{\rm high})
\widehat A_i\right).
\label{eq:policy_loss}
\end{equation}
Unaligned positions have zero distillation advantage. The full objective is
\begin{equation}
    \mathcal L=\mathcal L_{\rm PG}+\lambda_{\rm KL}\mathcal L_{\rm reg}.
    \label{eq:overall_objective}
\end{equation}
For heterogeneous tokenizers, $\mathcal L_{\rm reg}=\mathcal L_{\rm anchor}$.
Shared-tokenizer training retains the same credit rule and applies reference
KL regularization toward $\pi_0$ over all response positions.

\section{Experiments}
\label{sec:experiments}

\setlength{\floatsep}{6pt plus 1pt minus 1pt}
\setlength{\textfloatsep}{10pt plus 2pt minus 1pt}

\subsection{Experimental Setup}
\label{sec:setup}

\paragraph{Baselines.}
Cross-tokenizer experiments distill Qwen3-8B into Mistral-7B and Llama-3.1-8B
\citep{grattafiori2024llama}, both initialized by SFT on
$400{,}000$ OpenThoughts examples \citep{guha2026openthoughts}.
Figure~\ref{fig:tokstats} characterizes tokenizer mismatch for these pairs.
Heterogeneous baselines include the student initialization, teacher-reward
GRPO, DPCA, minimum edit-distance alignment (MinED), Universal Logit
Distillation (ULD), Approximate Likelihood Matching (ALM), and a matched
uniform-credit control. DPCA is the decoded-text-alignment OPD baseline;
the uniform control splits each chunk advantage equally while retaining
CAST's clipping, selective anchoring, and two-rollout sampling to isolate
its \emph{shift} credit rule. Shared-tokenizer experiments pair
Qwen3-8B or Qwen3-30B-A3B teachers with Qwen3-1.7B and Qwen3-4B students,
using raw instruction models or the same 400K SFT initialization as marked
in each table, with GRPO, GKD, MiniLLM, and DistiLLM as baselines.
We also directly evaluate the official Hugging Face releases of Qwen3-8B,
Mistral-7B-Instruct-v0.3, and Llama-3.1-8B-Instruct without additional training.

\paragraph{Training data.}
We randomly sample both v1 and v2 from OpenThoughts using seed $44$.
The prompt-only v1 set contains $20{,}000$ prompts, predominantly competition
mathematics, with estimated shares of $78.4\%$ mathematics, $17.4\%$ code,
$\approx3.2\%$ science, and $\approx1.0\%$ commonsense or logic. Without domain
labels, we infer mathematics and code from the \texttt{\textbackslash boxed\{}
answer format and executable-Python-function template, respectively, and
estimate other domains manually. v2 contains $15{,}000$ OpenThoughts3-1.2M
prompts, with $5{,}000$ each
from mathematics, code, and science. Both sets use responses generated on-policy
by the current student and are reported separately.

\paragraph{Training and evaluation protocol.}
Hyperparameters (Table~\ref{tab:hparams}), compute budgets, checkpoint selection,
and the shared six-benchmark pass@1 evaluation protocol are detailed in
Appendix~\ref{app:hyperparams}.

\subsection{Main Results}
\label{sec:main}

\begin{table}[t]
\setlength{\abovecaptionskip}{4pt}
\caption{Cross-tokenizer pass@1 (\%). Avg.: six-benchmark macro mean (two decimals).
None: no additional training; SFT init: distillation initialization;
DPCA: original DPCA+OPD. \textbf{Bold} marks the best scores per student;
teacher scores are separate.}
\label{tab:main}
\centering
\footnotesize
\setlength{\tabcolsep}{3.7pt}
\begin{adjustbox}{max width=\linewidth}
\begin{tabular}{lllrrrrrrr}
\toprule
Student & Teacher & Method & A24 & A25 & A26 & M500 & GPQA & LCB & Avg. \\
\midrule
-- & Qwen3-8B & None & 69.5 & 56.6 & 60.4 & 81.0 & 57.8 & 59.0 & 64.05 \\
\midrule
\multirow{8}{*}{Mistral-7B}
 & -- & None & 0.2 & 0.0 & 0.0 & 13.8 & 23.0 & 4.6 & 6.93 \\
 & -- & SFT init & 41.5 & 36.5 & 48.3 & 74.1 & 37.6 & \best{28.0} & 44.33 \\
 & Qwen3-8B & GRPO     & 19.4 & 23.1 & 25.4 & 65.5 & 31.2 & 21.2 & 30.97 \\
 & Qwen3-8B & MinED    & 44.7 & 36.8 & 47.9 & 74.4 & 34.6 & 25.7 & 44.02 \\
 & Qwen3-8B & ULD      & 45.5 & 36.8 & 47.3 & \best{79.6} & 37.2 & 22.2 & 44.77 \\
 & Qwen3-8B & ALM      & 22.3 & 18.9 & 26.2 & 63.5 & 15.7 & 12.5 & 26.52 \\
 & Qwen3-8B & DPCA     & 41.8 & 35.8 & 47.7 & 73.8 & 35.4 & 13.9 & 41.40 \\
 & Qwen3-8B & \textbf{CAST} & \best{45.7} & \best{40.9} & \best{51.0} & 78.5 & \best{39.5} & 27.7 & \best{47.22} \\
\midrule
\multirow{8}{*}{Llama-3.1-8B}
 & -- & None & 3.3 & 0.4 & 1.7 & 45.5 & 26.5 & 9.8 & 14.53 \\
 & -- & SFT init & 39.0 & 35.8 & 39.2 & 73.9 & \best{37.4} & \best{27.2} & 42.08 \\
 & Qwen3-8B & GRPO     & 40.9 & 35.2 & 40.2 & 73.2 & 37.2 & 26.2 & 42.15 \\
 & Qwen3-8B & MinED    & 41.0 & 36.8 & 39.4 & 74.9 & 32.7 & 23.6 & 41.40 \\
 & Qwen3-8B & ULD      & 41.6 & 36.2 & 41.9 & 75.1 & 34.7 & 25.8 & 42.55 \\
 & Qwen3-8B & ALM      & 35.6 & 29.3 & 40.8 & \best{76.2} & 36.9 & 26.4 & 40.87 \\
 & Qwen3-8B & DPCA     & 39.7 & 37.6 & 41.9 & 73.9 & 33.3 & 24.1 & 41.75 \\
 & Qwen3-8B & \textbf{CAST} & \best{44.1} & \best{37.7} & \best{45.0} & 75.0 & 34.5 & 26.9 & \best{43.87} \\
\bottomrule
\end{tabular}
\end{adjustbox}
\end{table}

\begin{table}[t]
\setlength{\abovecaptionskip}{4pt}
\caption{Shared-tokenizer pass@1 (\%) with raw Qwen3-1.7B, a Qwen3-8B teacher,
and v1.}
\label{tab:ladder}
\centering
\footnotesize
\setlength{\tabcolsep}{4.2pt}
\begin{adjustbox}{max width=\linewidth}
\begin{tabular}{lllrrrrrrr}
\toprule
Student & Teacher & Method & A24 & A25 & A26 & M500 & GPQA & LCB & Avg. \\
\midrule
-- & Qwen3-8B & None & 69.5 & 56.6 & 60.4 & 81.0 & 57.8 & 59.0 & 64.05 \\
\midrule
Qwen3-1.7B & -- & None & 46.2 & 34.0 & 42.5 & 75.9 & 37.8 & 34.1 & 45.08 \\
Qwen3-1.7B & Qwen3-8B & GRPO & 44.5 & 33.2 & 41.2 & 75.5 & 36.4 & 35.5 & 44.38 \\
Qwen3-1.7B & Qwen3-8B & GKD & 50.3 & 35.4 & 42.7 & 76.2 & 36.6 & \best{35.6} & 46.13 \\
Qwen3-1.7B & Qwen3-8B & MiniLLM & 48.5 & 35.1 & 46.7 & 76.9 & 35.0 & 34.0 & 46.03 \\
Qwen3-1.7B & Qwen3-8B & DistiLLM & 49.4 & \best{39.1} & 44.2 & 76.4 & 37.4 & 33.0 & 46.58 \\
Qwen3-1.7B & Qwen3-8B & \textbf{CAST} & \best{51.6} & 37.0 & \best{51.7} & \best{77.2} & \best{40.0} & 34.6 & \best{48.68} \\
\bottomrule
\end{tabular}
\end{adjustbox}
\end{table}

\begin{table}[t]
\setlength{\abovecaptionskip}{4pt}
\caption{Shared-tokenizer CAST ablations (Qwen3-8B $\rightarrow$ raw Qwen3-1.7B,
v1). Uniform uses equal weights; shuffled permutes CAST weights within chunks;
fixed $K{=}6$ replaces punctuation-defined chunks.}
\label{tab:shared_ablation}
\centering
\footnotesize
\setlength{\tabcolsep}{5.0pt}
\begin{tabular}{lrrrrrrr}
\toprule
Setting & A24 & A25 & A26 & M500 & GPQA & LCB & Avg. \\
\midrule
Uniform credit & 46.6 & 36.7 & 46.5 & 76.5 & 38.3 & 33.9 & 46.42 \\
Fixed chunks ($K{=}6$) & 48.9 & 36.6 & 46.5 & 76.3 & 39.3 & \best{35.1} & 47.12 \\
Shuffled credit & 48.2 & 36.6 & 42.9 & 76.2 & 36.0 & 33.0 & 45.48 \\
\textbf{CAST} & \best{51.6} & \best{37.0} & \best{51.7} & \best{77.2} & \best{40.0} & 34.6 & \best{48.68} \\
\bottomrule
\end{tabular}
\end{table}

\begin{table}[t]
\setlength{\abovecaptionskip}{4pt}
\caption{Mistral component ablations (Qwen3-8B teacher). Credit columns identify
weighting signals; Sel./Global anchor unaligned/all positions. Shuffled
permutes CAST weights within each chunk; rollouts count responses per prompt.}
\label{tab:credit_components}
\centering
\footnotesize
\setlength{\tabcolsep}{2.4pt}
\begin{adjustbox}{max width=\linewidth}
\begin{tabular}{lcccccccrrrrrrr}
\toprule
\multirow{2}{*}{Setting} &
\multicolumn{3}{c}{Credit assignment} &
\multirow{2}{*}{\shortstack{KL\\anchor}} &
\multicolumn{3}{c}{Rollouts} &
\multicolumn{7}{c}{Pass@1 (\%)} \\
\cmidrule(lr){2-4}\cmidrule(lr){6-8}\cmidrule(lr){9-15}
& Unif. & Uncert. & Smooth. & & $1$ & $2$ & $4$ &
A24 & A25 & A26 & M500 & GPQA & LCB & Avg. \\
\midrule
DPCA & $\checkmark$ &  &  & None & $\checkmark$ &  &  &
41.8 & 35.8 & 47.7 & 73.8 & 35.4 & 13.9 & 41.40 \\
\midrule
Uniform credit & $\checkmark$ &  &  & Sel. &  & $\checkmark$ &  &
42.3 & 37.2 & 46.7 & 78.9 & 38.5 & 26.8 & 45.07 \\
Shuffled credit &  & $\checkmark$ & $\checkmark$ & Sel. &  & $\checkmark$ &  &
45.5 & 36.4 & 46.2 & 75.5 & 33.5 & 27.9 & 44.17 \\
Uncertainty only &  & $\checkmark$ &  & Sel. &  & $\checkmark$ &  &
\best{48.3} & \best{41.6} & 51.2 & 75.3 & \best{40.2} & 17.0 & 45.60 \\
Smoothness only &  &  & $\checkmark$ & Sel. &  & $\checkmark$ &  &
45.7 & 35.8 & 46.9 & 73.5 & 37.1 & 21.1 & 43.35 \\
No anchoring &  & $\checkmark$ & $\checkmark$ & None &  & $\checkmark$ &  &
47.5 & 37.1 & 52.5 & 73.8 & 35.6 & 17.5 & 44.00 \\
Global anchoring &  & $\checkmark$ & $\checkmark$ & Global &  & $\checkmark$ &  &
46.1 & 37.2 & 49.4 & 75.6 & 37.6 & 19.9 & 44.30 \\
\midrule
No anchor + single &  & $\checkmark$ & $\checkmark$ & None & $\checkmark$ &  &  &
46.4 & 39.9 & 46.9 & 77.9 & 37.9 & 27.0 & 46.00 \\
Single rollout &  & $\checkmark$ & $\checkmark$ & Sel. & $\checkmark$ &  &  &
45.2 & 35.7 & 48.5 & \best{81.5} & 37.1 & \best{28.4} & 46.07 \\
Four rollouts &  & $\checkmark$ & $\checkmark$ & Sel. &  &  & $\checkmark$ &
47.2 & 37.2 & \best{52.7} & 75.2 & 38.1 & 19.9 & 45.05 \\
\textbf{CAST} &  & $\checkmark$ & $\checkmark$ & Sel. &  & $\checkmark$ &  &
45.7 & 40.9 & 51.0 & 78.5 & 39.5 & 27.7 & \best{47.22} \\
\bottomrule
\end{tabular}
\end{adjustbox}
\end{table}

\begin{table}[t]
\setlength{\abovecaptionskip}{4pt}
\caption{Three-seed Mistral pass@1 (\%, lr $10^{-6}$): mean $\pm$ sample SD
of highest-Avg-6 evaluated checkpoints (one per seed).
Per-seed results and selection: Appendix~\ref{app:seed_robustness}.}
\label{tab:seed_summary}
\centering
\footnotesize
\setlength{\tabcolsep}{2pt}
\begin{tabular*}{\linewidth}{@{\extracolsep{\fill}}lrrrrrrr@{}}
\toprule
Method & A24 & A25 & A26 & M500 & GPQA & LCB & Avg. \\
\midrule
DPCA & $40.4\pm1.8$ & $35.8\pm1.3$ & $45.7\pm1.8$ & $73.8\pm0.1$ & $35.1\pm0.9$ & $13.2\pm3.2$ & $40.66\pm0.86$ \\
Uniform credit & $44.4\pm1.8$ & $36.1\pm1.0$ & $46.2\pm1.2$ & $77.8\pm1.2$ & $37.2\pm1.2$ & $27.2\pm1.0$ & $44.81\pm0.22$ \\
Uncertainty only & $49.0\pm1.5$ & $40.7\pm1.0$ & $49.7\pm1.3$ & $74.7\pm1.1$ & $39.3\pm0.8$ & $17.8\pm0.8$ & $45.21\pm0.64$ \\
\textbf{CAST} & $45.6\pm0.7$ & $39.9\pm0.9$ & $50.1\pm1.2$ & $79.0\pm1.2$ & $39.5\pm0.7$ & $27.9\pm0.2$ & $47.01\pm0.28$ \\
\bottomrule
\end{tabular*}

\par\vspace{\floatsep}
\caption{Three-seed Llama-3.1-8B pass@1 (\%, Qwen3-8B teacher): mean $\pm$
sample SD. Per-seed checkpoints and aggregation: Appendix~\ref{app:seed_robustness}.}
\label{tab:llama_seed_summary}
\centering
\footnotesize
\setlength{\tabcolsep}{2pt}
\begin{tabular*}{\linewidth}{@{\extracolsep{\fill}}lrrrrrrr@{}}
\toprule
Method & A24 & A25 & A26 & M500 & GPQA & LCB & Avg. \\
\midrule
ULD & $41.4\pm0.3$ & $34.9\pm1.1$ & $41.3\pm1.0$ & $74.6\pm0.8$ & $34.5\pm1.3$ & $25.7\pm0.1$ & $42.08\pm0.43$ \\
\textbf{CAST} & $43.3\pm0.7$ & $38.0\pm0.9$ & $43.5\pm1.7$ & $74.8\pm0.6$ & $36.0\pm1.3$ & $26.7\pm0.2$ & $43.72\pm0.16$ \\
\bottomrule
\end{tabular*}
\end{table}

\begin{figure}[t]
    \centering
    \includegraphics[width=\textwidth]{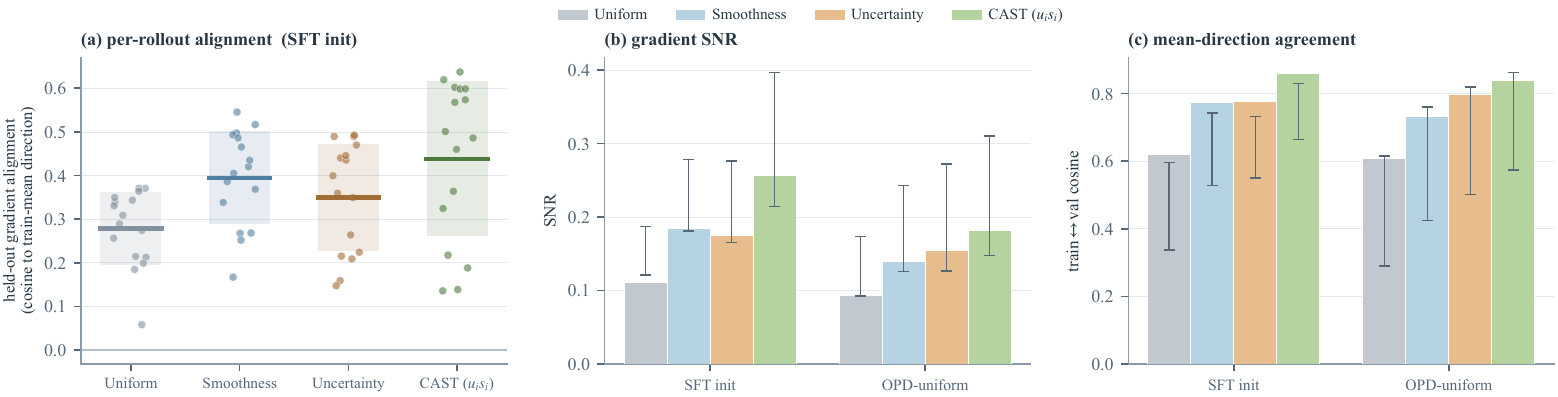}
    \caption{\textbf{Gradient consistency across four credit rules.}
Sketched Mistral LM-head gradients use 48 shared rollouts per checkpoint
(32 train, 16 held out). \textbf{(a)} Held-out cosine with each rule's training
mean at SFT initialization (mean $\pm$ SD). \textbf{(b)} SNR.
\textbf{(c)} Training--held-out mean cosine. Error bars in b,c: $95\%$ bootstrap CIs.
Protocol: Appendix~\ref{app:gradient_diagnostics}.}
    \label{fig:grad_quality}
\end{figure}

\begin{figure}[t]
\centering
\includegraphics[width=\textwidth]{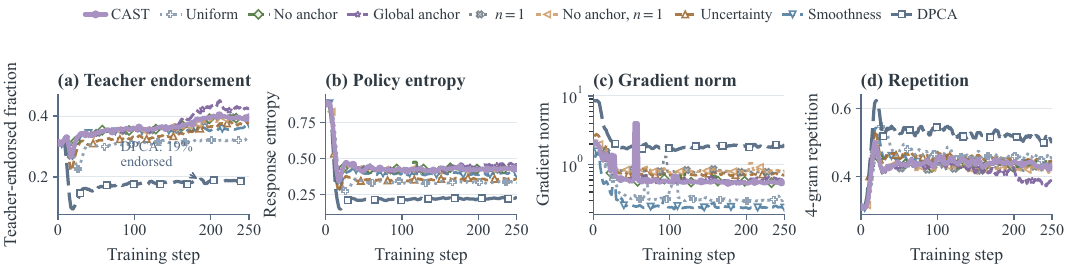}
\caption{\textbf{Mistral training dynamics.}
\textbf{(a)} Positive-teacher-advantage fraction; \textbf{(b)} policy entropy;
\textbf{(c)} gradient norm; \textbf{(d)} $4$-gram repetition.
Curves: five-step moving averages. Details: Appendix~\ref{app:training_dynamics}.}
\label{fig:training_dynamics}
\end{figure}

\paragraph{Cross-tokenizer distillation.}
Table~\ref{tab:main} shows that CAST achieved the
highest six-benchmark average for both student families: $47.22$ for Mistral
and $43.87$ for Llama, improving over ULD by $2.45$ and $1.32$ points,
respectively. CAST led ULD on five benchmarks for Mistral and four for Llama;
MATH-500 was lower for both, as was GPQA for Llama. Relative to the SFT
initialization, CAST improved five of six Mistral scores and raised the
average from $44.33$ to $47.22$, while LCB declined slightly from $28.0$
to $27.7$. Llama
improved on all three AIME benchmarks and MATH-500, while GPQA and LCB declined;
its average increased from $42.08$ to $43.87$.
Original DPCA reached averages of $41.40$ and $41.75$, respectively.

\paragraph{Shared-tokenizer distillation.}
With Qwen3-8B as teacher and v1 prompts, CAST improves all six
benchmarks over raw Qwen3-1.7B, reaching $48.68$ Avg-6 versus $46.58$ for the
strongest baseline, DistiLLM (Table~\ref{tab:ladder}). It exceeds uniform and
shuffled credit by $2.26$ and $3.20$ points (Table~\ref{tab:shared_ablation}).
Shuffling preserves chunk weights but changes their token assignments,
supporting the importance of credit placement. Punctuation-defined chunks
outperform fixed-length chunks ($K{=}6$) by $1.56$ points, although the latter
score higher on LCB ($35.1$ versus $34.6$). These controls support CAST's
\emph{shift} rule under exact token correspondence. Broader comparisons show
that its gains vary with teacher--student scale and prompt composition
(Tables~\ref{tab:scale_grid} and~\ref{tab:domain} in Appendix~\ref{app:scaling_data}).

\subsection{Discussions}
\label{sec:discussion}
\label{sec:ablations}

\paragraph{Credit assignment beyond chunk-level alignment.}
In Table~\ref{tab:credit_components}, the matched uniform control retains
decoded-text alignment, selective anchoring, and two-rollout sampling but
removes structured token credit, reducing the average from $47.22$ to $45.07$.
Shuffling preserves CAST's within-chunk weights but breaks their token
correspondence; CAST improves five of six benchmarks over this control, raising the
average from $44.17$ to $47.22$, with gains of $4.8$ on AIME-2026 and $6.0$ on
GPQA, while LCB is slightly lower ($27.7$ versus $27.9$).
These controls support CAST's token-specific \emph{shift} credit.
Appendix~\ref{app:token_credit} illustrates its allocation within a chunk and
across token categories in one rollout (Figure~\ref{fig:credit_case}).

\paragraph{Performance across training seeds.}
Tables~\ref{tab:seed_summary} and~\ref{tab:llama_seed_summary} summarize three
seeds per configuration. CAST reaches $47.01\pm0.28$ Avg-6 on Mistral and
$43.72\pm0.16$ on Llama, versus $42.08\pm0.43$ for Llama ULD.
For both students, all three CAST scores exceed the corresponding comparators'
highest Avg-6 (Tables~\ref{tab:seed_robustness}--\ref{tab:llama_seed_records} in
Appendix~\ref{app:seed_robustness}).

\paragraph{CAST yields more consistent gradient directions.}
Figure~\ref{fig:grad_quality} compares four credit rules on shared rollouts at
SFT initialization and an OPD-uniform checkpoint. CAST has the highest
gradient signal-to-noise ratio (SNR) and training--held-out mean-direction
agreement at both; SFT SNR rises from $0.111$ under uniform credit to $0.256$
under CAST (protocol in Appendix~\ref{app:gradient_diagnostics}).
Complementary training traces in Figure~\ref{fig:training_dynamics} show higher
teacher endorsement and policy entropy, smaller gradient norms, and lower
repetition than DPCA over the final $50$ updates.
Appendix~\ref{app:training_dynamics} details these runs; selected 2024 AIME~I
responses in Appendix~\ref{app:case_study} illustrate CAST reaching final answers
where SFT or DPCA responses repeat or remain unfinished.

\paragraph{Smoothness helps mitigate training collapse.}
Smoothness produces a more balanced domain profile by moderating
uncertainty-driven credit: with selective anchoring and two rollouts fixed,
LCB rises from $17.0$ to $27.7$ and Avg-6 from $45.60$ to $47.22$, while the
other five benchmarks change by $-0.20$ points on average
(Table~\ref{tab:credit_components}). This moderation discounts uncertain tokens
when teacher scores vary sharply, redistributing credit toward more locally
regular supervision. It primarily uses teacher structure: $79\%$ of
redistribution occurs at teacher-variation positions, and comparable variation
persists without projection under shared tokenization
(Figure~\ref{fig:smoothness_attribution} in Appendix~\ref{app:smoothness_attribution}).
Separate seed-$44$ diagnostics show slightly broader effective credit support
for CAST ($2.814$ tokens versus $2.632$ for uncertainty-only weighting;
Figure~\ref{fig:credit_concentration}, Appendix~\ref{app:seed_robustness}).
This broader, teacher-informed allocation offers a plausible explanation for
CAST's fewer observed collapses than uncertainty-only credit across six seeds
(Table~\ref{tab:collapse}, Appendix~\ref{app:seed_robustness}).

\paragraph{The role of selective anchoring.}
With CAST credit and $n{=}2$ fixed, selective anchoring reaches $47.22$ Avg-6,
versus $44.30$ with global anchoring and $44.00$ without it; the largest
selective--global gain is on LCB ($+7.8$; Table~\ref{tab:credit_components}).
With selective anchoring, two rollouts outperform one ($+1.15$ Avg-6) and four
(Appendix~\ref{app:sampling}). Over the final $50$ updates, global anchoring has higher entropy and
lower repetition than CAST, yet weaker downstream performance
(Figure~\ref{fig:training_dynamics}b,d),
consistent with overconstraint: global anchoring penalizes departures from SFT
at aligned positions, potentially opposing teacher credit. Selective anchoring
instead constrains drift at unaligned positions while leaving aligned positions
to teacher supervision.

\section{Conclusion}
\label{sec:conclusion}

We introduced CAST for token-level \emph{shift-or-stay} cross-tokenizer OPD:
uncertainty--smoothness credit redistributes aligned chunk advantages, while
selective anchoring keeps unaligned predictions close to the initial policy.
The advantage-preserving credit rule admits an exact KL-regularized
characterization and extends to shared-tokenizer punctuation-defined or
fixed-length chunks. CAST improves the six-benchmark average over the strongest
evaluated baseline by $1.32$--$2.45$ points for heterogeneous students
and $2.10$ for Qwen3-1.7B. Controls support token-specific credit placement,
more consistent LM-head gradients, and gains across Mistral and Llama seeds.
Smoothness recovers code performance and broadens effective credit support,
alongside fewer observed collapses; selective anchoring improves the
two-rollout configuration.

\section*{Limitations}

Our evaluation covers a limited set of model families and mainly
reasoning-oriented training data, so domain gains may change with student
capacity or prompt composition. Training compute varies across methods, and
repeated-seed performance covers four Mistral and two Llama configurations.
Broader model and data coverage,
matched-compute evaluation, and direct tests of how credit allocation affects
full-model optimization and training collapse remain future work.

\par
\FloatBarrier
\setlength{\floatsep}{12pt plus 2pt minus 1pt}
\setlength{\textfloatsep}{14pt plus 2pt minus 2pt}
\label{end:main_text}
\typeout{CAST_MAIN_TEXT_LAST_PAGE=\thepage}
\clearpage
\subsubsection*{Reproducibility Statement}
The method is described in \S\ref{sec:method}, with credit weights in
Eq.~\eqref{eq:us_credit}, selective anchoring
in Eqs.~\eqref{eq:reference_ratio}--\eqref{eq:anchor_loss}, and the training
objective in Eq.~\eqref{eq:overall_objective}. The experimental setup is in
\S\ref{sec:setup}; training and evaluation protocols are in
Appendix~\ref{app:hyperparams}. Tables~\ref{tab:seed_summary} and~\ref{tab:llama_seed_summary} report cross-seed
statistics, with individual checkpoints and collapse records in
Appendix~\ref{app:seed_robustness}. Protocols for the main-text gradient and
training diagnostics (Figures~\ref{fig:grad_quality} and~\ref{fig:training_dynamics})
are in Appendices~\ref{app:gradient_diagnostics} and~\ref{app:training_dynamics}.
The token-level example is provided in Appendix~\ref{app:token_credit}.
Figure-generation scripts use reported results, recorded token-level traces,
gradient summaries, and training logs.

\section*{AI Use Disclosure}
Generative AI tools were used for manuscript polishing.

\bibliography{iclr2027_conference}
\bibliographystyle{iclr2027_conference}

\clearpage
\appendix

\section{Additional Method Details}

\label{app:method_details}

\subsection{Chunk Construction}
\label{app:chunk_construction}

\paragraph{Heterogeneous tokenizers.}
We use DPCA's decoded-text chunk alignment \citep{niu2026tokenizerbarrier}:
a greedy two-pointer scan extends the shorter side until the decoded student
and teacher strings match after Unicode and tokenizer-specific whitespace
normalization. Each matched pair of contiguous token groups $(I_c,J_c)$
forms an \emph{alignment chunk}. We discard excessively large or unreliable
matches; retained student positions form $\mathcal A$ and the remainder
$\mathcal U$. Each retained alignment chunk's teacher score is projected onto
the student grid as
\begin{equation}
    \widetilde{\ell}_i^T
    =
    \frac{1}{|I_c|}\sum_{j\in J_c}\ell_j^T,
    \qquad i\in I_c.
    \label{eq:teacher_projection}
\end{equation}
After the projection in Eq.~\eqref{eq:teacher_projection}, we regroup contiguous
aligned student tokens into fixed-length $K$-token \emph{credit chunks} (plus a
shorter residual chunk), collected in $\mathcal G$. These define the units for
advantage computation and token-level credit allocation; their boundaries need
not coincide with alignment-chunk boundaries. We use $K=6$ unless stated otherwise.

\paragraph{Shared tokenizer.}
The training path uses identity correspondence between response tokens, so
$M=N$, $\widetilde{\ell}_i^T=\ell_i^T$, and $\mathcal U=\varnothing$.
We split responses after commas, periods, exclamation marks, and question marks
(with any trailing suffix as the final chunk); these punctuation-defined credit chunks
directly form $\mathcal G$ without fixed-$K$ regrouping.
The fixed-length variant in Table~\ref{tab:shared_ablation} instead partitions
the response into consecutive $K=6$ token chunks, with a shorter final chunk
when needed.

\FloatBarrier

\section{Additional Experiments and Diagnostics}

\label{app:additional_experiments}

\subsection{Teacher Scale and Training Distribution}

\label{app:scaling_data}

\begin{table}[!htbp]
\caption{Shared-tokenizer teacher--student scaling results from raw Qwen3
instruction models. The method column identifies results trained with
CAST.}
\label{tab:scale_grid}
\centering
\footnotesize
\setlength{\tabcolsep}{4.1pt}
\begin{tabular}{lllrrrrrrr}
\toprule
Student & Teacher & Method & A24 & A25 & A26 & M500 & GPQA & LCB & Avg. \\
\midrule
\multirow{3}{*}{Qwen3-1.7B} & None (raw) & -- & 46.2 & 34.0 & 42.5 & 75.9 & 37.8 & 34.1 & 45.08 \\
 & Qwen3-8B & \textbf{CAST} & \best{51.6} & 37.0 & \best{51.7} & \best{77.2} & \best{40.0} & 34.6 & \best{48.68} \\
 & Qwen3-30B-A3B & \textbf{CAST} & 50.2 & \best{39.4} & 49.0 & 76.5 & 37.9 & \best{35.3} & 48.05 \\
\midrule
\multirow{3}{*}{Qwen3-4B} & None (raw) & -- & 66.5 & 58.6 & 58.8 & \best{81.7} & \best{54.5} & \best{54.4} & 62.42 \\
 & Qwen3-8B & \textbf{CAST} & 66.4 & 59.3 & 60.6 & 81.2 & 54.3 & 52.2 & 62.33 \\
 & Qwen3-30B-A3B & \textbf{CAST} & \best{69.2} & \best{63.6} & \best{66.9} & 81.5 & 53.7 & 52.6 & \best{64.58} \\
\bottomrule
\end{tabular}
\end{table}

\paragraph{Teacher--student scaling.}
Table~\ref{tab:scale_grid} shows that the benefit of a larger teacher depends
on student capacity. For Qwen3-1.7B, the 8B teacher reaches $48.68$, above the
30B-A3B teacher's $48.05$; both exceed the raw student's $45.08$. For Qwen3-4B,
the 8B teacher leaves the average nearly unchanged ($62.33$ versus $62.42$),
whereas the 30B-A3B teacher reaches $64.58$ and improves all three AIME
benchmarks. Thus the larger teacher benefits the 4B student but provides no
additional average gain for the 1.7B student in these comparisons.

\begin{table}[!htbp]
\caption{Student, teacher, and prompt-set comparison from raw Qwen3 instruction
models. All trained rows use CAST with $n{=}2$.}
\label{tab:domain}
\centering
\footnotesize
\setlength{\tabcolsep}{2.0pt}
\begin{adjustbox}{max width=\linewidth}
\begin{tabular}{llllrrrrrrr}
\toprule
Student & Teacher & Data & Method & A24 & A25 & A26 & M500 & GPQA & LCB & Avg. \\
\midrule
\multirow{3}{*}{Qwen3-1.7B} & -- & -- & None & 46.2 & 34.0 & 42.5 & 75.9 & 37.8 & 34.1 & 45.08 \\
 & Qwen3-8B & v1 & CAST & \best{51.6} & 37.0 & \best{51.7} & \best{77.2} & \best{40.0} & 34.6 & \best{48.68} \\
 & Qwen3-8B & v2 & CAST & 49.1 & 37.9 & 48.8 & 76.8 & 37.9 & 34.4 & 47.48 \\
\midrule
\multirow{5}{*}{Qwen3-4B} & -- & -- & None & 66.5 & 58.6 & 58.8 & 81.7 & 54.5 & \best{54.4} & 62.42 \\
 & Qwen3-8B & v1 & CAST & 66.4 & 59.3 & 60.6 & 81.2 & 54.3 & 52.2 & 62.33 \\
 & Qwen3-8B & v2 & CAST & 67.7 & 62.8 & 63.3 & \best{82.1} & 54.5 & 53.9 & 64.05 \\
 & Qwen3-30B-A3B & v1 & CAST & 69.2 & \best{63.6} & \best{66.9} & 81.5 & 53.7 & 52.6 & 64.58 \\
 & Qwen3-30B-A3B & v2 & CAST & \best{70.3} & 62.6 & \best{66.9} & 81.0 & \best{55.4} & 53.4 & \best{64.93} \\
\bottomrule
\end{tabular}
\end{adjustbox}
\end{table}

\paragraph{Training distribution and teacher scale.}
Table~\ref{tab:domain} compares prompt sets and teacher scale across raw
Qwen3 students. For the 1.7B student with an 8B teacher, CAST reaches averages
of $48.68$ on v1 and $47.48$ on v2. For the 4B student, v2 raises the average
from $62.33$ to $64.05$ with the 8B teacher and from $64.58$ to $64.93$ with the
30B-A3B teacher. The larger teacher obtains the higher 4B average under both
prompt sets, with the gains concentrated on the three AIME benchmarks.

Prompt distribution further affects code retention in the shared-tokenizer
setting (Table~\ref{tab:domain}). With an 8B teacher, Qwen3-1.7B retains LCB
near the raw student's $34.1$, reaching $34.6$/$34.4$ on v1/v2. For Qwen3-4B,
v2 reduces the LCB loss relative to the raw model under both teachers:
scores increase from $52.2$ to $53.9$ with the 8B teacher and from $52.6$ to
$53.4$ with the 30B-A3B teacher, alongside higher Avg-6. These results locate
the observed trade-offs in both credit design and prompt distribution.

\FloatBarrier

\subsection{Additional Sampling Analysis}

\label{app:sampling}

\paragraph{Sampling multiplicity and selective anchoring.}
Table~\ref{tab:credit_components} compares $n\in\{1,2,4\}$ independent responses
per prompt, generating 64, 128, or 256 responses per update at a fixed batch
of 64 prompts. At the reported checkpoints, with selective anchoring fixed,
two rollouts improve four of six benchmarks over one, including gains of
$0.5$, $5.2$, and $2.5$ points on AIME-2024, AIME-2025, and AIME-2026,
and raise Avg-6 from $46.07$ to $47.22$. Four rollouts achieve the strongest
AIME-2024 and AIME-2026 scores among these settings, but reduce LCB by $7.8$
points relative to $n{=}2$ and lower Avg-6 to $45.05$. Without anchoring,
increasing from one to two rollouts instead lowers Avg-6 from $46.00$ to
$44.00$. Thus, selective anchoring contributes $+3.22$ points at $n{=}2$
but only $+0.07$ at $n{=}1$, a difference of $3.15$ points in these estimates.
Anchor scope also matters: at $n{=}2$, selective anchoring improves five of
six benchmarks over global anchoring ($47.22$ versus $44.30$ Avg-6), with
the largest gap on LCB ($27.7$ versus $19.9$). These comparisons favor CAST's
two-rollout configuration with selective anchoring among the tested settings;
the benefit of additional responses depends on the accompanying regularization.

\subsection{Token-Level Credit Analysis}

\label{app:token_credit}

\begin{figure}[!htbp]
\centering
\includegraphics[width=0.95\textwidth]{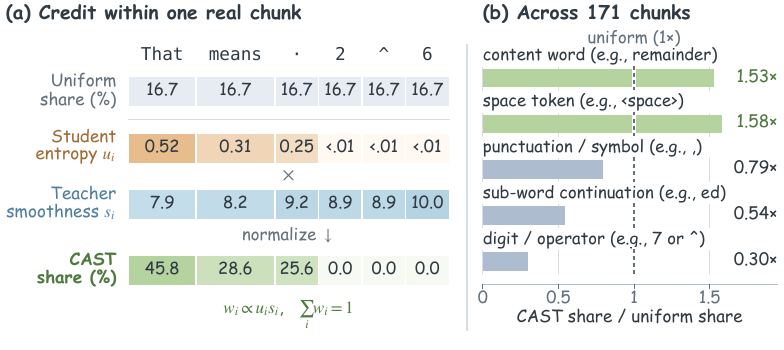}
\caption{\textbf{CAST redistributes credit to individual tokens.}
\textbf{(a)} Uniform shares, uncertainty $u_i$, smoothness $s_i$, and CAST
weights $w_i$ for a six-token chunk from a Qwen3-8B $\rightarrow$ Mistral-7B rollout.
\textbf{(b)} CAST-to-uniform credit-share ratios by token category across
all $171$ chunks of the same rollout; ratios above $1$ indicate upweighting.}
\label{fig:credit_case}
\end{figure}

\paragraph{Token-level analysis of credit assignment.}
Figure~\ref{fig:credit_case} illustrates the credit rule in
Eq.~\eqref{eq:us_credit} on a recorded training rollout.
In panel (a), uniform credit assigns $16.7\%$ to each token, whereas CAST
assigns $45.8\%$, $28.6\%$, and $25.6\%$ to the three high-entropy positions.
The near-deterministic digit and operator tokens in $2^6$ receive effectively
no share. Panel (b) aggregates the complete rollout: content words and
space tokens receive $1.53\times$ and $1.58\times$ their uniform shares, while
punctuation, sub-word continuations, and digit/operator tokens receive
$0.79\times$, $0.54\times$, and $0.30\times$. The controlled downstream
comparison is in Table~\ref{tab:credit_components}.

\FloatBarrier

\subsection{Training Dynamics: Protocol and Analysis}

\label{app:training_dynamics}

\paragraph{Diagnostic protocol.}
The main-text training curves in Figure~\ref{fig:training_dynamics} compare
Qwen3-8B $\rightarrow$ Mistral-7B CAST with seven pipeline variants and DPCA.
Global anchor regularizes all
response positions; No anchor removes reference regularization. Pipeline
variants use two rollouts unless marked $n{=}1$. DPCA uses the plain OPD loss
with uniform within-chunk credit and one rollout. The Uniform diagnostic uses
learning rate $5\times10^{-7}$; other runs use $10^{-6}$. These runs are separate
from the seed evaluation in Table~\ref{tab:seed_summary}.
The teacher-endorsed fraction is the fraction of rollout tokens for which the
teacher assigns a higher log-probability than the student. The other panels
show mean per-token response-policy entropy, gradient norm on a log scale,
and rollout $4$-gram repetition. Curves show five-step moving averages.

\paragraph{Training dynamics and stability.}
Credit concentration, averaged over training, shows
that CAST assigns each six-token chunk's budget to an effective $2.8$ tokens,
whereas smoothness-only credit remains diffuse over $5.3$ tokens. This agrees
with the uncertainty-sensitive token-level behavior in
Figure~\ref{fig:credit_case}. The training curves show how these configurations
evolve. Over the final 50 updates, DPCA's teacher-endorsed fraction falls to
$0.19$, compared with $0.40$ for CAST, so the teacher would downweight roughly
$80\%$ of the sampled tokens (Figure~\ref{fig:training_dynamics}a). At the same
time, DPCA reaches a response entropy of $0.22$ versus $0.43$ for CAST, a mean
gradient norm of $1.86$ versus $0.55$, and a $4$-gram repetition rate of $0.51$
versus $0.44$. Together, these trends describe a low-entropy policy whose
sampled tokens often receive lower teacher scores, with larger gradients and
more repetitive generation. The uniform-credit
diagnostic run lies between CAST and DPCA in teacher endorsement,
response entropy, and repetition, while having a smaller gradient norm.

The added anchor controls show that these diagnostics need to be interpreted
alongside downstream performance. Over the same final 50 updates, global
anchoring reaches teacher endorsement $0.43$, response entropy $0.45$, and
repetition $0.39$, compared with $0.40$, $0.43$, and $0.44$ for CAST. Despite
these favorable entropy and repetition statistics, global anchoring has a lower
reported six-benchmark average ($44.30$ versus $47.22$;
Table~\ref{tab:credit_components}). With one
rollout, removing anchoring leaves endorsement, entropy, and repetition close
to the selectively anchored $n{=}1$ run, while increasing the mean gradient
norm from $0.79$ to $0.85$. Together with the anchor--rollout comparison in
Table~\ref{tab:credit_components}, these curves show that anchoring affects
optimization behavior and downstream transfer differently across settings.

\FloatBarrier

\subsection{Gradient Diagnostic Protocol}
\label{app:gradient_diagnostics}

Figure~\ref{fig:grad_quality} compares four credit rules at the Mistral-7B SFT
initialization and an OPD-uniform (DPCA) checkpoint at step 150; neither policy
was trained with CAST's product rule. At each checkpoint, the rules reuse the
same 48 on-policy rollouts, generated with the training response limit of
16,384 tokens. We normalize weights proportional to $1$, $s_i$, $u_i$, or
$u_i s_i$ within each chunk and form token advantages $A_gw_i$. We measure
the resulting policy gradients on the language-model head and compress them
with a fixed CountSketch shared by all rules.

Let $g$ denote a sketched per-rollout head gradient. Gradient SNR is
$\|\mathbb{E}[g]\|^2/\mathbb{E}\|g-\mathbb{E}[g]\|^2$, which measures the
squared mean-gradient magnitude relative to variation across rollouts.
For direction comparisons, we split the rollouts into 32 for mean-gradient
estimation and 16 held out. Figure~\ref{fig:grad_quality}a shows each held-out
rollout gradient's cosine with its rule's training-set mean gradient;
Figure~\ref{fig:grad_quality}c compares the training- and held-out-set mean
gradients. Bars in panels b and c show point estimates with $95\%$ bootstrap
intervals from 2,000 resamples. Each rule is compared with its own mean
direction, so these metrics measure consistency across sampled rollouts.

CAST has the highest SNR and training--held-out mean-gradient cosine at both
checkpoints, with the single-factor rules between CAST and uniform credit.
Relative to uniform credit, SNR increases from $0.111$ to $0.256$ at SFT
initialization and from $0.093$ to $0.182$ at the DPCA checkpoint. At SFT
initialization, mean held-out rollout alignment also increases from $0.279$
to $0.438$. These observations concern sampled head gradients at the two
fixed policies; downstream effects are assessed by the credit-rule controls
in Table~\ref{tab:credit_components}.

\FloatBarrier

\subsection{Smoothness Attribution}
\label{app:smoothness_attribution}

\begin{figure}[!htbp]
    \centering
    \includegraphics[width=\textwidth]{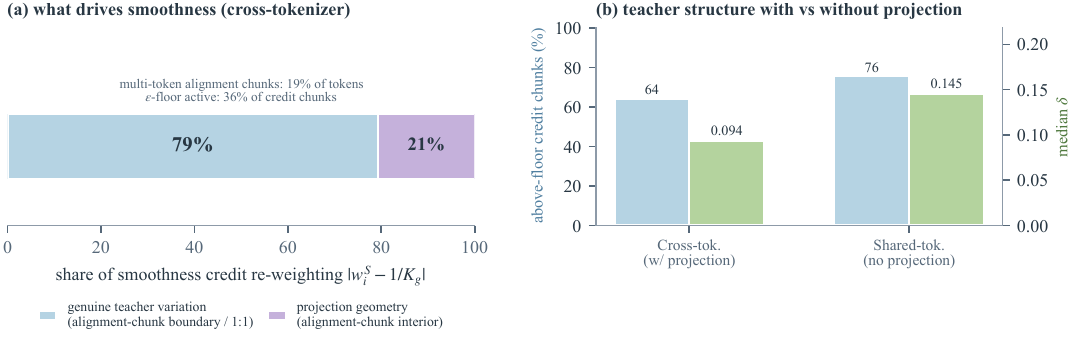}
    \caption{\textbf{Smoothness primarily reflects teacher structure rather
    than projection geometry.}
    \textbf{(a)} On cross-tokenizer rollouts (Mistral $\leftarrow$ Qwen3-8B),
    $79\%$ of the credit re-weighting induced by smoothness-only allocation,
    $|w_i^S-1/K_g|$, falls on teacher-variation positions (alignment-chunk boundaries
    and $1{:}1$ positions), while $21\%$ falls at projection-flat alignment-chunk
    interiors. Alignment chunks containing multiple student tokens cover $19\%$
    of tokens; the $\epsilon$ floor is active in $36\%$ of credit chunks.
    \textbf{(b)} Cross-tokenizer projection is compared with shared-tokenizer
    Qwen3-8B $\rightarrow$ Qwen3-4B, where $M=N$ and no projection is used.
    The shared-tokenizer setting has more credit chunks whose mean variation exceeds
    the floor and a larger median $\delta$. Both panels are computed offline
    from saved rollouts and teacher scores, without additional training.}
    \label{fig:smoothness_attribution}
\end{figure}

\paragraph{Teacher-signal variation and projection.}
Equation~\eqref{eq:teacher_projection} in Appendix~\ref{app:chunk_construction}
assigns each alignment chunk's aggregate
teacher score uniformly to its student tokens. Although a credit chunk can cross
alignment-chunk boundaries, the projected signal is constant within each
alignment chunk. The smoothness term could therefore reward alignment
geometry rather than genuine teacher structure. We test this explanation with
two analyses that require no retraining (Figure~\ref{fig:smoothness_attribution}).
First, alignment chunks containing multiple student tokens cover $19\%$ of
student tokens in the cross-tokenizer rollouts. Defining smoothness-only credit as
$w_i^S=s_i/\sum_{j\in g}s_j$, $79\%$ of its re-weighting mass
$|w_i^S-1/K_g|$ falls on teacher-variation positions (alignment-chunk boundaries
and $1{:}1$ positions), compared with $21\%$ at projection-flat alignment-chunk
interiors. The $\epsilon$ floor is active in $36\%$ of credit chunks, limiting the influence of
small local variations.

Second, the shared-tokenizer setting removes projection entirely because
$M=N$. Its teacher signal retains at least comparable local structure: within
credit chunks, median local variation is $0.145$, compared with $0.094$
cross-tokenizer, and mean variation exceeds $\epsilon_{\min}$ in $76\%$ of
credit chunks, compared with
$64\%$. The median within-chunk smoothness-weight spread is also similar
($s_{\max}/s_{\min}=2.68$ versus $2.60$). Together, these analyses indicate
that smoothness responds primarily to teacher-signal variation, while
projection geometry contributes a minority of its observed re-weighting.

\FloatBarrier

\section{Training and Evaluation Details}
\label{app:hyperparams}

Table~\ref{tab:hparams} lists the main cross-tokenizer training configuration.
Tables~\ref{tab:main}--\ref{tab:credit_components} report checkpoint-level point
estimates. Tables~\ref{tab:seed_summary} and~\ref{tab:llama_seed_summary} summarize
three evaluated seeds for four Mistral and two Llama configurations;
Appendix~\ref{app:seed_robustness} provides the
individual checkpoints, scores, and collapse records.
Training compute varies across methods, which share the evaluation protocol
described below.

\paragraph{Training protocol.}
Unless otherwise specified, training uses a prompt batch size of 64,
learning rate $10^{-6}$, and maximum prompt and response lengths of
1,024 and 16,384 tokens, respectively. Heterogeneous pairs use regroup size
$K{=}6$. By default, CAST uses multi-rollout sampling ($n{=}2$): both
independently sampled responses per prompt contribute to training, without
best-response selection or cross-response advantage normalization. The teacher
only provides log-probabilities on student responses.

\begin{table}[!htbp]
\caption{Training hyperparameters for CAST (cross-tokenizer setting).}
\label{tab:hparams}
\centering
\footnotesize
\begin{tabular}{ll}
\toprule
Hyperparameter & Value \\
\midrule
Teacher / students & Qwen3-8B / \{Mistral-7B, Llama-3.1-8B\} \\
Student initialization & SFT on OpenThoughts 400K \\
Prompt-set sampling seed (v1 / v2) & 44 \\
Optimizer learning rate & $1\times10^{-6}$ \\
LR warmup & 10 steps \\
Weight decay & 0.01 \\
Gradient clipping & 1.0 \\
PPO clip ($\varepsilon_{\mathrm{low}}$ / $\varepsilon_{\mathrm{high}}$) & 0.2 / 0.28 \\
Advantage clamp & $\pm 10.0$ \\
Rollouts per prompt $n$ & 2 \\
Prompt batch size & 64 \\
Max prompt / response length & 1024 / 16384 tokens \\
Regroup size $K$ & 6 \\
Entropy floor $\eta$ & $10^{-8}$ \\
Smoothness floor $\epsilon_{\min}$ & 0.1 \\
KL anchor coefficient $\lambda_{\mathrm{KL}}$ & 0.2 \\
Loss aggregation & token-mean \\
Rollout temperature (train / eval) & 1.0 / 0.6 \\
Total steps & 250 \\
\bottomrule
\end{tabular}
\end{table}

\paragraph{Compute and token budget.}
We implement training in the \texttt{verl} RL framework. Each run uses one
node with eight H100 GPUs: four score responses with the frozen Qwen3-8B
teacher using tensor parallelism, and four perform student rollout generation
and policy updates. Training runs for $250$ optimizer steps, with $64$ prompts
and $n$ rollouts per prompt at each step
(Table~\ref{tab:credit_components}). The maximum prompt and response lengths
are $1{,}024$ and $16{,}384$ tokens, respectively, and the learning rate is
$10^{-6}$.

For the default CAST configuration ($n{=}2$), the mean response length is
$11.1$k tokens. Each step therefore generates
$\approx64\times2\times11.1\text{k}\approx1.42$M response tokens, or
$\approx1.48$M actor tokens including prompts. Over training, these totals are
$\approx355$M generated tokens and $\approx369$M actor tokens. The teacher
scores each rollout once, giving $64\times2\times250=32{,}000$ scoring forwards
over $\approx355$M teacher tokens. Wall-clock time is $\approx30$~h per run,
or $\approx437$~s per step. The $n{=}1$ configurations (DPCA, single rollout,
and no anchor$+$single) use half this generation and teacher-scoring budget:
$\approx178$M tokens and $16{,}000$ teacher forwards. The $n{=}4$ configuration
uses twice the $n{=}2$ budget. At a given $n$, ULD, MinED, and ALM share the
generation and teacher-scoring volumes but differ in alignment and loss
computation.

\paragraph{Checkpoint selection.}
We save checkpoints every $25$ steps and evaluate them on all six benchmarks.
The point-estimate comparisons use one selected checkpoint per configuration.
For the four repeated Mistral configurations, Tables~\ref{tab:main}
and~\ref{tab:credit_components} use the highest Avg-6 across evaluated
seed--checkpoint pairs. Table~\ref{tab:seed_summary} instead averages each seed's
highest-Avg-6 evaluated checkpoint (details in Appendix~\ref{app:seed_robustness}).
For Llama CAST and ULD, Table~\ref{tab:main} uses the highest Avg-6
among the three per-seed checkpoints in Table~\ref{tab:llama_seed_records};
Table~\ref{tab:llama_seed_summary} summarizes all three.
Benchmark-based selection can make the reported averages optimistic.

\paragraph{Evaluation protocol and run variability.}
We evaluate AIME 2024, 2025, and 2026 \citep{aime26}, MATH-500
\citep{hendrycks2021measuring}, GPQA-Diamond
\citep{rein2024gpqa}, and LiveCodeBench
\citep{jain2025livecodebench} at temperature $0.6$ and top-$p$ $0.95$.
We estimate pass@1 using 32 samples per AIME problem and four per problem on
other benchmarks. Avg-6 is the unweighted mean of the six displayed
one-decimal scores, rounded to two decimals. We also inspect individual
benchmarks to distinguish aggregate gains from domain-specific effects. The annual
AIME sets contain $30$, $30$, and $15$ problems for 2024, 2025, and 2026,
respectively. Each problem accounts for $3.3$, $3.3$, and $6.7$ percentage
points in the corresponding score. MATH-500 ($500$ problems), GPQA-Diamond
($198$), and LiveCodeBench ($960$) provide considerably more stable estimates.
We therefore emphasize patterns that repeat across AIME years, the other
benchmarks, both students (Mistral and Llama), and the component ablations
(Table~\ref{tab:credit_components}), together with the mechanism analysis.
Tables~\ref{tab:seed_summary} and~\ref{tab:llama_seed_summary} report means and
sample standard deviations across three evaluated seeds for four Mistral and
two Llama configurations; Tables~\ref{tab:seed_robustness}
and~\ref{tab:llama_seed_records} list their individual scores. Other comparisons
use single runs. Figure~\ref{fig:training_dynamics} gives separate training diagnostics.

\clearpage
\section{Case Study}
\label{app:case_study}

The following cases compare the SFT initialization, DPCA, ULD, and CAST on
four selected problems from the 2024 AIME I. Each panel presents a condensed question,
excerpts from the final $240$ characters of each recorded response, the
extracted-answer outcome, and an analysis of the visible behavior.
Ellipses indicate omitted text, with mathematical formatting normalized;
-- denotes no extracted final answer. The examples illustrate repetition
and answer completion, complementing the training traces in
Figure~\ref{fig:training_dynamics} and aggregate results in Table~\ref{tab:main}.

\begingroup
\definecolor{CasePanelBlue}{HTML}{315D79}
\definecolor{CasePanelTint}{HTML}{EFF5F8}
\definecolor{CasePanelBorder}{HTML}{B8C6CE}
\tcbset{castcase/.style={colback=white,colframe=CasePanelBorder,
colbacktitle=CasePanelBlue,coltitle=white,fonttitle=\bfseries,
boxrule=0.45pt,arc=1mm,left=7pt,right=7pt,top=5pt,bottom=5pt,
before skip=7pt,after skip=7pt}}

\begin{tcolorbox}[castcase,title={Case study 1: Grid paths and repeated direction sequences}]
\label{case:aime_21}
\textbf{Question.} How many length-$16$ paths from the lower-left to the upper-right corner of an $8\times8$ grid change direction exactly four times?\par
\textbf{Reference answer:} $294$.\hfill\textit{\href{https://artofproblemsolving.com/wiki/index.php/2024_AIME_I_Problems/Problem_6}{2024 AIME~I, Problem~6}}\par
\smallskip
{\footnotesize\setlength{\tabcolsep}{3pt}
\begin{tabularx}{\linewidth}{@{}p{0.13\linewidth}>{\raggedright\arraybackslash}Xp{0.17\linewidth}@{}}
\toprule
\textbf{Method} & \textbf{Response excerpts} & \textbf{Result} \\
\midrule
\textcolor{CasePanelBlue}{\textbf{CAST}} & Total ways for this case: $21\times7=147$ \textnormal{[\ldots]} Adding both cases: $147+147=294$ & \textcolor{CasePanelBlue}{$\checkmark$ Correct}\par $294$ \\ \addlinespace[3pt]
SFT & Suppose the path starts with R. Then, the first move is R. \textnormal{[\ldots]} If it's U, that's & --\par No final answer \\ \addlinespace[3pt]
DPCA & up, right, up, right, up, right, up, right \textnormal{[\ldots]} & --\par No final answer \\ \addlinespace[3pt]
ULD & then up, then right, then up, then right, then up, then right \textnormal{[\ldots]} & --\par No final answer \\ \addlinespace[3pt]
\bottomrule
\end{tabularx}}
\begin{tcolorbox}[colback=CasePanelTint,colframe=CasePanelTint,boxrule=0pt,
arc=0.5mm,left=5pt,right=5pt,top=4pt,bottom=4pt,before skip=4pt,after skip=0pt]
\textbf{Analysis.} The DPCA and ULD response tails repeat alternating directions without returning a count. SFT continues reconsidering individual moves. CAST completes the count by adding two symmetric cases, $147+147=294$, illustrating the contrast between repeated local enumeration and answer completion.
\end{tcolorbox}
\end{tcolorbox}
\begin{tcolorbox}[castcase,title={Case study 2: Enclosing sphere and unfinished calculations}]
\label{case:aime_2}
\textbf{Question.} Rectangular boxes have surface area $54$ and volume $23$. Let $r$ be the radius of the smallest sphere that can contain each box. If $r^2=p/q$ in lowest terms, find $p+q$.\par
\textbf{Reference answer:} $721$.\hfill\textit{\href{https://artofproblemsolving.com/wiki/index.php/2024_AIME_I_Problems/Problem_15}{2024 AIME~I, Problem~15}}\par
\smallskip
{\footnotesize\setlength{\tabcolsep}{3pt}
\begin{tabularx}{\linewidth}{@{}p{0.13\linewidth}>{\raggedright\arraybackslash}Xp{0.17\linewidth}@{}}
\toprule
\textbf{Method} & \textbf{Response excerpts} & \textbf{Result} \\
\midrule
\textcolor{CasePanelBlue}{\textbf{CAST}} & We have $r^2=\frac{657}{64}$ \textnormal{[\ldots]} $p+q=657+64=\boxed{721}$ & \textcolor{CasePanelBlue}{$\checkmark$ Correct}\par $721$ \\ \addlinespace[3pt]
SFT & Thus numerator is $16{,}583+4{,}600\sqrt{6}$, Therefore & --\par No final answer \\ \addlinespace[3pt]
DPCA & Therefore, $f(x)=2x^2+\frac{529}{27x^2-46x}$. \textnormal{[\ldots]} But & --\par No final answer \\ \addlinespace[3pt]
ULD & Multiply all terms by 4: $16-31\lambda-132.25+46\lambda=0$ \textnormal{[\ldots]} & --\par No final answer \\ \addlinespace[3pt]
\bottomrule
\end{tabularx}}
\begin{tcolorbox}[colback=CasePanelTint,colframe=CasePanelTint,boxrule=0pt,
arc=0.5mm,left=5pt,right=5pt,top=4pt,bottom=4pt,before skip=4pt,after skip=0pt]
\textbf{Analysis.} The baseline tails remain in intermediate algebra: SFT expands a radical expression, DPCA rewrites a function of $x$, and ULD manipulates an equation in $\lambda$. CAST reaches $r^2=657/64$ and returns $p+q=721$. This case contrasts ongoing intermediate calculations with completion of the requested answer.
\end{tcolorbox}
\end{tcolorbox}
\clearpage
\begin{tcolorbox}[castcase,title={Case study 3: Lottery probability and shared success}]
\label{case:aime_29}
\textbf{Question.} Jen picks four distinct numbers from $\{1,\ldots,10\}$, and four numbers are drawn uniformly. Conditional on matching at least two, the probability of matching all four is $m/n$ in lowest terms. Find $m+n$.\par
\textbf{Reference answer:} $116$.\hfill\textit{\href{https://artofproblemsolving.com/wiki/index.php/2024_AIME_I_Problems/Problem_4}{2024 AIME~I, Problem~4}}\par
\smallskip
{\footnotesize\setlength{\tabcolsep}{3pt}
\begin{tabularx}{\linewidth}{@{}p{0.13\linewidth}>{\raggedright\arraybackslash}Xp{0.17\linewidth}@{}}
\toprule
\textbf{Method} & \textbf{Response excerpts} & \textbf{Result} \\
\midrule
\textcolor{CasePanelBlue}{\textbf{CAST}} & The conditional probability is $\frac{1}{115}$ \textnormal{[\ldots]} $m+n=1+115=\boxed{116}$ & \textcolor{CasePanelBlue}{$\checkmark$ Correct}\par $116$ \\ \addlinespace[3pt]
SFT & So that's correct. Alternatively, maybe the problem is that the lottery is selecting & --\par No final answer \\ \addlinespace[3pt]
DPCA & But the problem is about the probability that her numbers are a subset of the lottery & --\par No final answer \\ \addlinespace[3pt]
ULD & The fraction $\frac{1}{115}$ is already in its simplest form \textnormal{[\ldots]} $m+n=1+115=116$ & \textcolor{CasePanelBlue}{$\checkmark$ Correct}\par $116$ \\ \addlinespace[3pt]
\bottomrule
\end{tabularx}}
\begin{tcolorbox}[colback=CasePanelTint,colframe=CasePanelTint,boxrule=0pt,
arc=0.5mm,left=5pt,right=5pt,top=4pt,bottom=4pt,before skip=4pt,after skip=0pt]
\textbf{Analysis.} ULD and CAST both finish with $1/115$ and the correct sum $116$. The SFT and DPCA tails return to interpreting the lottery setup without an extracted answer. This case shows a success shared with ULD, alongside the unfinished responses of SFT and DPCA.
\end{tcolorbox}
\end{tcolorbox}
\begin{tcolorbox}[castcase,title={Case study 4: Logarithmic system and repeated reconsideration}]
\label{case:aime_17}
\textbf{Question.} For real $x,y>1$ satisfying $\log_x(y^x)=\log_y(x^{4y})=10$, find $xy$.\par
\textbf{Reference answer:} $25$.\hfill\textit{\href{https://artofproblemsolving.com/wiki/index.php/2024_AIME_I_Problems/Problem_2}{2024 AIME~I, Problem~2}}\par
\smallskip
{\footnotesize\setlength{\tabcolsep}{3pt}
\begin{tabularx}{\linewidth}{@{}p{0.13\linewidth}>{\raggedright\arraybackslash}Xp{0.17\linewidth}@{}}
\toprule
\textbf{Method} & \textbf{Response excerpts} & \textbf{Result} \\
\midrule
\textcolor{CasePanelBlue}{\textbf{CAST}} & since the problem only asks for the value of $xy$ \textnormal{[\ldots]} $\boxed{25}$ & \textcolor{CasePanelBlue}{$\checkmark$ Correct}\par $25$ \\ \addlinespace[3pt]
SFT & Let me try t = 0. Let me see, but that's not allowed. Hmm, maybe t = 0. Let me try t = 0. \textnormal{[\ldots]} & --\par No final answer \\ \addlinespace[3pt]
DPCA & Try $k=0.5$: \textnormal{[\ldots]} $\ln(10)\approx2.3026$; $\ln(0.5)\approx-0.6931$ \textnormal{[\ldots]} & --\par No final answer \\ \addlinespace[3pt]
ULD & the product $xy=25$ is derived directly from the relationships between $x$ and $y$. \textnormal{[\ldots]} $\boxed{25}$ & \textcolor{CasePanelBlue}{$\checkmark$ Correct}\par $25$ \\ \addlinespace[3pt]
\bottomrule
\end{tabularx}}
\begin{tcolorbox}[colback=CasePanelTint,colframe=CasePanelTint,boxrule=0pt,
arc=0.5mm,left=5pt,right=5pt,top=4pt,bottom=4pt,before skip=4pt,after skip=0pt]
\textbf{Analysis.} SFT repeatedly proposes $t=0$ while acknowledging that it is not allowed, illustrating a loop of self-correction that does not resolve the problem. DPCA continues numerical substitutions without returning $xy$. ULD and CAST both conclude with $25$, completing the requested calculation.
\end{tcolorbox}
\end{tcolorbox}
\endgroup

\clearpage
\section{Performance Across Training Seeds}
\label{app:seed_robustness}

Table~\ref{tab:seed_robustness} provides the per-seed checkpoints and scores
underlying the main-text summary in Table~\ref{tab:seed_summary}.
The four Qwen3-8B $\rightarrow$ Mistral-7B configurations are DPCA, uniform
credit, uncertainty-only credit, and full CAST (Table~\ref{tab:credit_components}).
All use the same SFT initialization and training data, with
learning rate $10^{-6}$. CAST and both credit controls use $n{=}2$ and
selective anchoring ($\lambda_{\mathrm{KL}}=0.2$); original DPCA uses $n{=}1$
without anchoring. For each seed, we select the evaluated checkpoint with the
highest Avg-6, using the unrounded mean of the six displayed scores.
This yields three checkpoints per configuration, one per seed.
Table~\ref{tab:seed_summary} reports their mean and sample standard deviation;
Tables~\ref{tab:main} and~\ref{tab:credit_components} report the checkpoint with
the highest Avg-6 among these three.
These performance summaries exclude unevaluated early-terminated runs;
Table~\ref{tab:collapse} records training outcomes for all six listed seeds.

\begin{table}[!htbp]
\caption{Performance across three evaluated training seeds per configuration
for Qwen3-8B $\rightarrow$ Mistral-7B at learning rate $10^{-6}$.
Per-seed pass@1 (\%); Step identifies each seed's highest-Avg-6
evaluated checkpoint.
Aggregate means and sample standard deviations appear in
Table~\ref{tab:seed_summary}.}
\label{tab:seed_robustness}
\centering
\footnotesize
\setlength{\tabcolsep}{2pt}
\renewcommand{\arraystretch}{1.2}
\begin{tabular*}{\linewidth}{@{\extracolsep{\fill}}lrrrrrrrrr@{}}
\toprule
Method & Seed & Step & A24 & A25 & A26 & M500 & GPQA & LCB & Avg. \\
\midrule
\multirow{3}{*}{DPCA}
 & 44 & 150 & 38.3 & 37.1 & 44.4 & 73.7 & 35.7 & 16.0 & 40.87 \\
 & 653512071 & 175 & 41.8 & 35.8 & 47.7 & 73.8 & 35.4 & 13.9 & 41.40 \\
 & 893671858 & 175 & 41.0 & 34.6 & 45.0 & 73.8 & 34.1 & 9.8 & 39.72 \\
\midrule
\multirow{3}{*}{Uniform credit}
 & 44 & 150 & 45.0 & 35.2 & 47.1 & 78.0 & 36.5 & 26.5 & 44.72 \\
 & 653512071 & 175 & 42.3 & 37.2 & 46.7 & 78.9 & 38.5 & 26.8 & 45.07 \\
 & 893671858 & 200 & 45.8 & 35.9 & 44.8 & 76.6 & 36.5 & 28.3 & 44.65 \\
\midrule
\multirow{3}{*}{Uncertainty only}
 & 44 & 200 & 48.3 & 41.6 & 51.2 & 75.3 & 40.2 & 17.0 & 45.60 \\
 & 653512071 & 200 & 47.9 & 39.6 & 49.2 & 73.4 & 38.8 & 17.9 & 44.47 \\
 & 893671858 & 175 & 50.7 & 40.9 & 48.8 & 75.4 & 39.0 & 18.5 & 45.55 \\
\midrule
\multirow{3}{*}{\textbf{CAST}}
 & 44 & 200 & 45.7 & 40.9 & 51.0 & 78.5 & 39.5 & 27.7 & 47.22 \\
 & 653512071 & 175 & 46.2 & 39.5 & 50.6 & 78.1 & 40.2 & 28.1 & 47.12 \\
 & 893671858 & 150 & 44.8 & 39.3 & 48.8 & 80.4 & 38.8 & 28.0 & 46.68 \\
\bottomrule
\end{tabular*}
\end{table}

CAST's selected Avg-6 ranges from $46.68$ to $47.22$, above the highest
selected control score ($45.60$). Its mean Avg-6 exceeds uniform credit,
uncertainty-only credit, and DPCA by $2.19$, $1.80$, and $6.34$ points,
respectively, with differences computed from unrounded seed means.
The per-benchmark summaries also retain the code-performance trade-off:
CAST reaches $27.9\pm0.2$ on LCB, compared with $17.8\pm0.8$ for
uncertainty-only credit, while the latter has the higher AIME-2024 mean.

\paragraph{Llama CAST and ULD.}
Table~\ref{tab:llama_seed_records} gives the recorded checkpoints for seeds
$44$, $653512071$, and $893671858$ with Qwen3-8B as teacher.
Table~\ref{tab:llama_seed_summary} reports the mean and sample standard deviation
across these three runs, computing Avg-6 statistics from each seed's unrounded
mean of the six displayed benchmark scores.
The highest Avg-6 is $43.87$ for CAST (seed $653512071$, step $250$)
and $42.55$ for ULD (the same seed, step $225$); these checkpoints supply
the corresponding complete rows in Table~\ref{tab:main}.

\begin{table}[!htbp]
\caption{Per-seed Llama-3.1-8B pass@1 (\%) with Qwen3-8B as teacher.
Step identifies the reported checkpoint; summary statistics appear in
Table~\ref{tab:llama_seed_summary}.}
\label{tab:llama_seed_records}
\centering
\footnotesize
\setlength{\tabcolsep}{2pt}
\begin{tabular*}{\linewidth}{@{\extracolsep{\fill}}lrrrrrrrrr@{}}
\toprule
Method & Seed & Step & A24 & A25 & A26 & M500 & GPQA & LCB & Avg. \\
\midrule
\multirow{3}{*}{ULD}
 & 44 & 200 & 41.6 & 34.5 & 41.9 & 75.1 & 33.1 & 25.6 & 41.97 \\
 & 653512071 & 225 & 41.6 & 36.2 & 41.9 & 75.1 & 34.7 & 25.8 & 42.55 \\
 & 893671858 & 175 & 41.0 & 34.1 & 40.2 & 73.7 & 35.7 & 25.6 & 41.72 \\
\midrule
\multirow{3}{*}{\textbf{CAST}}
 & 44 & 250 & 42.7 & 37.2 & 43.8 & 74.1 & 36.9 & 26.6 & 43.55 \\
 & 653512071 & 250 & 44.1 & 37.7 & 45.0 & 75.0 & 34.5 & 26.9 & 43.87 \\
 & 893671858 & 200 & 43.1 & 39.0 & 41.7 & 75.2 & 36.7 & 26.7 & 43.73 \\
\bottomrule
\end{tabular*}
\end{table}

\clearpage
\begin{table}[H]
\caption{Training collapse across six seeds (Qwen3-8B $\rightarrow$
Mistral-7B, lr $10^{-6}$). $\checkmark$: completed $250$ steps with healthy
scores; $\times$: repetition collapse (RC) or entropy collapse (EC).
Rates count collapsed runs out of six.}
\label{tab:collapse}
\centering
\footnotesize
\setlength{\tabcolsep}{4pt}
\begin{tabular}{lccccccc}
\toprule
\multirow{2}{*}{Method} & \multicolumn{6}{c}{Seed} & \multirow{2}{*}{Collapse rate} \\
\cmidrule(lr){2-7}
 & 44 & 45 & 46 & 653512071 & 893671858 & 417698746 & \\
\midrule
DPCA (vanilla) & $\checkmark$ & $\checkmark$ & $\checkmark$ & $\checkmark$ & $\checkmark$ & $\checkmark$ & $0/6$ ($0.0\%$) \\
Uniform credit & $\checkmark$ & $\times$ (RC) & $\times$ (EC) & $\checkmark$ & $\checkmark$ & $\times$ (EC) & $3/6$ ($50.0\%$) \\
Uncertainty only & $\checkmark$ & $\times$ (RC) & $\times$ (EC) & $\checkmark$ & $\checkmark$ & $\checkmark$ & $2/6$ ($33.3\%$) \\
\textbf{CAST} & $\checkmark$ & $\checkmark$ & $\times$ (RC) & $\checkmark$ & $\checkmark$ & $\checkmark$ & $1/6$ ($16.7\%$) \\
\bottomrule
\end{tabular}
\end{table}

\begin{figure}[H]
\centering
\includegraphics[width=\linewidth]{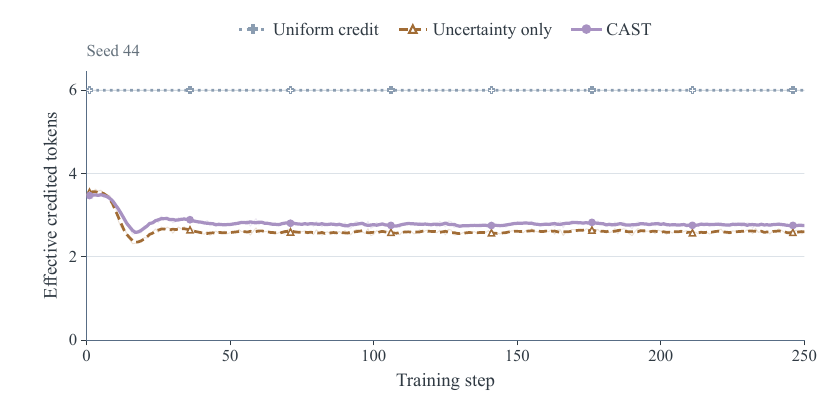}
\caption{\textbf{CAST has slightly broader effective credit support than
uncertainty-only weighting in seed-$44$ diagnostic runs.}
Using the seed-$44$ diagnostic trajectories from Figure~\ref{fig:training_dynamics},
we compare uniform credit, uncertainty-only weighting, and CAST.
For each chunk, the effective token count is the inverse participation ratio
$N_{\mathrm{eff}}(g)=1/\sum_{i\in g}w_i^2$, with normalized weights
$w_i=\widehat A_i/A_g$.
It equals $K_g$ under uniform allocation and approaches $1$ as credit
concentrates on one token, rather than counting tokens above a threshold.
The monitor averages this quantity over chunks with $K_g>1$ and
$|A_g|>10^{-8}$; faint traces show the exported training-step values, and
the overlaid curves show five-step moving averages.}
\label{fig:credit_concentration}
\end{figure}

\paragraph{Credit concentration and training stability.}
Over steps $1$--$250$ of the seed-$44$ diagnostic runs, CAST averages $2.814$
effective tokens versus $2.632$ for uncertainty-only weighting, a $6.9\%$
increase, while uniform credit averages $5.999$
(Figure~\ref{fig:credit_concentration}). Alongside the lower observed collapse
count across six seeds ($1/6$ versus $2/6$; Table~\ref{tab:collapse}), this
is consistent with smoothness moderating uncertainty-driven credit concentration.
The bound in Eq.~\eqref{eq:smoothness_bound} controls the relative reweighting
introduced by smoothness; it does not guarantee diffuse product weights.
Uniform credit remains more diffuse yet collapses in $3/6$ runs, indicating
that credit dispersion alone does not explain training stability.

\end{document}